\documentclass[pdflatex,sn-mathphys-num]{sn-jnl}

\makeatletter
\let\orig@normalsize=\normalsize
\def\normalsize{
    \orig@normalsize
    \setlength\abovedisplayskip{11pt}
    \setlength\belowdisplayskip{11pt}
    \setlength\abovedisplayshortskip{11pt}
    \setlength\belowdisplayshortskip{11pt}
}

\usepackage{graphicx}
\usepackage{multirow}
\usepackage{amsmath,amssymb,amsfonts}
\usepackage{mathtools}
\usepackage{amsthm}
\usepackage{aliascnt}
\usepackage{mathrsfs}
\usepackage{anyfontsize}
\usepackage{etoolbox}
\usepackage{xcolor}
\usepackage{textcomp}
\usepackage{manyfoot}
\usepackage{bm}
\usepackage{booktabs}
\usepackage{algorithm}
\usepackage{algorithmicx}
\usepackage{algpseudocode}
\usepackage{listings}

\let\mathscr\mathscr
\newtheorem{theorem}{Theorem}

\newaliascnt{proposition}{theorem}
\newtheorem{proposition}[proposition]{Proposition}
\aliascntresetthe{proposition}

\newaliascnt{lemma}{theorem}
\newtheorem{lemma}[lemma]{Lemma}
\aliascntresetthe{lemma}

\newaliascnt{corollary}{theorem}

\aliascntresetthe{corollary}

\newtheorem{definition}{Definition}
\newtheorem{remark}{Remark}

\DeclareMathOperator{\argmin}{argmin}
\DeclareMathOperator{\rank}{rank}
\DeclareMathOperator{\kNN}{kNN}
\DeclareMathOperator{\Err}{Err}
\DeclareMathOperator{\Var}{Var}

\preto\equation{\vspace*{-1em}}
\preto\align{\vspace*{-1em}}
\preto\gather{\vspace*{-1em}}

\pretocmd{\thebibliography}{%
  \interlinepenalty=10000\relax
  \widowpenalty=10000\relax
  \clubpenalty=10000\relax
}{}{}

\begin{document}

\title[Sheaf-Laplacian Obstruction and Projection Hardness for Cross-Modal Compatibility on a Modality-Independent Site]{Sheaf-Laplacian Obstruction and Projection Hardness for Cross-Modal Compatibility on a Modality-Independent Site}


\author[1,2]{\fnm{Tibor} \sur{Sloboda} \orcid{https://orcid.org/0000-0001-6817-6297} \email{tibor.sloboda@stuba.sk}}

\affil[1]{\orgdiv{UPAI, Faculty of Informatics and Information Technologies},
  \orgname{Slovak University of Technology in Bratislava},
  \orgaddress{\city{Bratislava}, \country{Slovakia}}}

\affil[2]{\orgname{aleph0 s.\ r.\ o.},
  \orgaddress{\city{Bratislava}, \country{Slovakia}}}


\abstract{Cross-modal representations vary in how easily they can be aligned, and compatibility is generally non-transitive: two modalities may align through an intermediate modality at lower complexity than through a direct map. We introduce a reference formalism that evaluates all modalities on a fixed neighborhood site and defines two directed invariants. Projection hardness \(H_{a\to b}(\varepsilon)\) is the minimum complexity within a nested Lipschitz-controlled family required to reach error \(\varepsilon\). For a declared local projection family, sheaf-Laplacian obstruction \(C_{a\to b}(\varepsilon)\) is the minimum variation of locally fitted projection parameters required to reach the same error. Under identity restrictions, obstruction is the graph Dirichlet energy of a vector-valued parameter field; the sheaf formulation identifies zero energy with successful gluing and extends to edge-dependent transports and heterogeneous parameter spaces. We relate obstruction to the site spectral gap and excess global-map error, and construct ReLU examples showing non-transitive compatibility and a quadratic separation between staged and direct width. Controlled synthetic calibrations recover the predicted hardness separation, cut-induced obstruction scaling, and sensitivity to the fixed site graph.}

\keywords{cross-modal alignment, projection hardness, sheaf-Laplacian obstruction, multimodal embeddings, non-transitive compatibility}



\maketitle

\section{Introduction}

Modern multimodal systems routinely produce embeddings for heterogeneous data types (text, audio, images, video, proprioception, graphs) and align them in a joint representational space. Empirically, some modality pairs admit simple, approximately linear alignments. Others require nonlinear models, extensive supervision, or remain difficult despite large model capacity.

Alignment is also not reliably transitive: modality $a$ may align well to $c$ and $c$ to $b$, with poor direct alignment from $a$ to $b$. A two-stage alignment $a\to c\to b$ may require substantially less complexity than the direct $a\to b$. These observations motivate a theory that is comparable across modality pairs, local-to-global in structure, and capable of expressing non-transitivity and bridging as formal statements.

This paper proposes a reference formalism that fixes the choices which otherwise make results difficult to compare. The central requirement is that all modalities are analyzed on the same base domain: a common neighborhood site on sample indices. Sheaf operators, Laplacians, energies, and spectra are therefore defined on the same graph.

\noindent
\\On that site we define two invariants for each directed pair $(a\to b)$:

\begin{itemize}
\item \textbf{Projection hardness} $H_{a\to b}(\varepsilon)$: the lowest complexity, within a nested Lipschitz-controlled projection family, needed for a global map to achieve alignment error at most $\varepsilon$.
\item \textbf{Sheaf-Laplacian obstruction} $C_{a\to b}(\varepsilon)$: the minimal spatial variation in a locally fit field of projection parameters, within a declared local projection family, required to reach the same target error over the fixed site.
\end{itemize}

\noindent
Hardness measures global expressivity across the nested family. Obstruction measures local-to-global inconsistency within the declared local family: good local alignments may require substantial parameter variation to glue across neighborhoods.

We compute the obstruction through a projection-parameter sheaf on the base graph. Under identity restrictions, its 0-Laplacian energy is exactly the quadratic smoothness penalty for a vector-valued graph signal. The sheaf formulation identifies zero energy with a global section and extends the same gluing criterion to heterogeneous local parameter spaces and edge-dependent transports.

\subsection*{Contributions}
In this paper, we make the following contributions.

\begin{enumerate}
\item We fix a modality-independent site on which all modalities are evaluated, together with finite-dimensional real inner-product spaces and linear maps as the stalk category. This is necessary because energies and spectra computed on different neighborhood graphs cannot be compared directly.
\item We introduce projection hardness using normalized nested projection families indexed by rank, width, or another fixed complexity parameter. The common family and normalization provide the same complexity scale for every directed modality pair.
\item We construct a projection-parameter sheaf whose 0-Laplacian energy measures the variation required by locally fitted projection maps. This separates the complexity of finding one global map from the consistency of local maps across the site.
\item We derive relations connecting the sheaf spectral gap with alignment stability and bound the excess error of a global map in terms of obstruction energy. These results give the obstruction quantity a direct interpretation beyond its use as a regularization value.
\item We give explicit constructions of non-transitivity and bridging. In the one-dimensional ReLU construction, each stage of a two-stage alignment has width $O(w)$, whereas exact direct alignment at the same depth requires width $\Omega(w^2)$.
\item We report controlled synthetic calibrations for projection-hardness separation, obstruction scaling, and site sensitivity. Their analytically known reference values make it possible to verify the computations directly, and the accompanying protocol specifies how the same quantities should be reported for paired multimodal data.
\end{enumerate}

\section{Related Work}

\subsection*{Cross-modal representation learning} Aligning representations from different modalities is a central problem in multimodal machine learning~\cite{baltruvsaitis2018multimodal}. Standard approaches range from canonical correlation analysis (CCA)~\cite{hotelling1936relations} and its kernelized variants~\cite{lai2000kernel} to contrastive methods such as CLIP~\cite{radford2021learning}, which align image-text pairs through large-scale pre-training. These methods achieve strong empirical results. Their global alignment objectives generally do not model local geometric structure explicitly or quantify alignment hardness separately from global error.

\subsection*{Sheaf theory in deep learning}
Cellular sheaves provide a framework for modeling relational data with heterogeneous stalks and consistency constraints~\cite{curry2014sheaves,ghrist2014elementary}. Recent work has applied sheaf Laplacians to graph neural networks, yielding Sheaf Neural Networks (SNNs) and sheaf diffusion models that generalize standard message passing by respecting local topological structure~\cite{hansen2020generalized,bodnar2022neural,calmon2023sheaf}. We use sheaves here as an analytical tool to measure the obstruction to gluing local alignment maps. The projection-parameter sheaf encodes spatial variation in model parameters, connecting sheaf-theoretic energies to regularized regression objectives.

\subsection*{Hardness and complexity of representation alignment}
The difficulty of learning alignments between representation spaces has been studied from several angles, including sample complexity bounds for domain adaptation~\cite{ben2010theory,mansour2009domain}, minimax rates for transfer learning~\cite{cai2021transfer}, and the dimensionality of optimal transport mappings~\cite{perrot2016mapping}. Low-capacity alignment maps also underlie model stitching, where compatibility is assessed through a restricted trainable interface~\cite{bansal2021stitching}, while recent representational-alignment work organizes transformation choices along a spectrum from identity and linear maps to constrained nonlinear functions~\cite{sucholutsky2023aligned}. Projection hardness turns this spectrum into a tolerance-indexed empirical approximation-complexity quantity: for a normalized nested projection family, it identifies the first class that reaches the target alignment error on the fixed sample set. Statistical complexity measures such as VC dimension and Rademacher complexity then bound generalization within the selected class~\cite{bartlett2002rademacher,mohri2018foundations}. The two quantities therefore describe the approximation and generalization requirements of the same projection family.

\subsection*{Non-transitivity and composition in multimodal systems}
Empirical studies show that modality alignment can fail to be transitive: modality $A$ may align to $B$ and $B$ to $C$, with poor direct alignment from $A$ to $C$~\cite{liang2022foundations,fei2023bridging}. This phenomenon is relevant for bridge modalities or pivot languages in translation~\cite{firat2016zero,gu2018universal}. Our framework formalizes it through bridging via composed projection families, giving explicit hardness bounds that demonstrate when mediation reduces complexity. The ReLU construction in \autoref{sec:relu_construction} adapts the triangle-wave composition used in neural-network depth-separation arguments~\cite{telgarsky2016benefits} to obtain a staged-versus-direct width separation for alignment.

\begin{figure*}[!ht]
  \centering
  \includegraphics[width=0.6\textwidth]{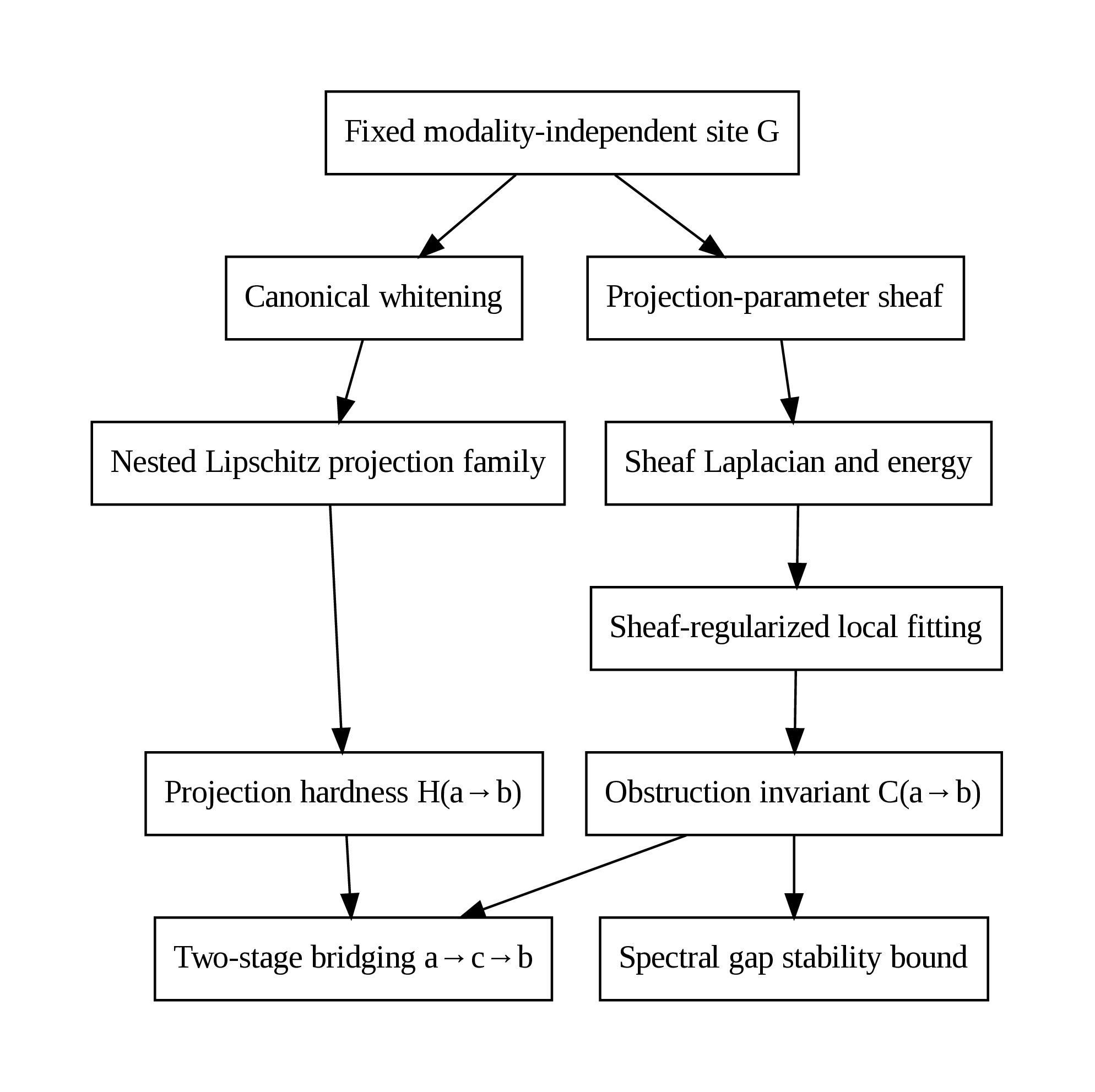}
  \caption{Conditional dependency graph of the framework. Nodes represent definitions, constructions, or results. A directed edge indicates a logical prerequisite.}
  \label{fig:dependency_graph}
\end{figure*}

\section{Framework and Setup}

\subsection*{Data, modalities, and embeddings}
Let $V=\{1,\dots,n\}$ index samples. Let $\mathcal{M}$ denote a finite set of modalities. For each modality $m\in\mathcal{M}$, an encoder (trained independently of other modalities) produces an embedding

\nopagebreak
\begin{equation}
z_i^{(m)} \in \mathbb{R}^{d_m} \qquad i\in V
\end{equation}

We compare modalities via maps between their embedding spaces. We normalize embeddings and, when needed, fix a common output dimension $d$ (either by design or through a fixed projection convention). The formalism is presented in the common-dimension case; for $d_a\neq d_b$, the source and target spaces are $\mathbb{R}^{d_a}$ and $\mathbb{R}^{d_b}$, respectively.

\subsection*{A shared base site as a comparability constraint}
A recurring source of ambiguity in cross-modal analysis is that different modalities induce different neighborhood graphs when neighborhoods are built in embedding space. Sheaf Laplacians constructed on these different graphs cannot be compared directly because their energies and spectra refer to different sites.

\noindent
\\We enforce the following requirement throughout:

\begin{definition}[Modality-independent site requirement]
All modalities are analyzed on the same base domain: a fixed simplicial complex (in practice, a fixed graph) on sample indices $V$.
\end{definition}

All sheaf Laplacians, energies, and obstruction measures are computed on this shared site.

\subsection{Sheaves and the sheaf Laplacian}

\subsection*{Stalk category}

\begin{definition}[Stalk category]
Let $\mathbf{Hilb}_{\mathrm{fd}}$ denote the category whose objects are finite-dimensional real inner-product spaces and whose morphisms are linear maps.
\end{definition}

Inner products supply adjoints, quadratic energies, Laplacians, and spectral gaps. Linear restriction maps interact cleanly with regression-style estimators.

\subsection*{Base domain as a simplicial complex}

Let $K$ be a simplicial complex on the vertex set $V$, intended to approximate locality in an underlying semantic domain. In most computations we work on the $1$-skeleton graph $G=(V,E)$ of $K$ (possibly with edge weights).

\begin{remark}
For comparability, $G$ is fixed once and used across all modalities and all modality pairs. Its construction is part of the methodology and is set before the alignment procedure.
\end{remark}

\begin{samepage}
\subsection*{Cellular sheaf on a graph}

We work with a cellular sheaf on a graph.

\begin{definition}[Cellular sheaf on a graph]
Let $G=(V,E)$ be an undirected graph. A cellular sheaf $\mathcal{F}$ of $\mathbf{Hilb}_{\mathrm{fd}}$-objects on $G$ consists of:

\begin{itemize}
\item a stalk $\mathcal{F}(v)$ for each vertex $v\in V$,
\item a stalk $\mathcal{F}(e)$ for each edge $e\in E$,
\item for each incidence $v\leq e$ (i.e., $v$ is an endpoint of $e$), a restriction map

\nopagebreak
\begin{equation}
\rho_{e\leftarrow v}:\mathcal{F}(v)\to \mathcal{F}(e)
\end{equation}

\end{itemize}

\end{definition}
\end{samepage}

A $0$-cochain is an assignment $c(v)\in \mathcal{F}(v)$ for each vertex. A $1$-cochain is an assignment $x(e)\in \mathcal{F}(e)$ for each edge. Fix an orientation for each edge $e=(u,v)$. The coboundary operator $\delta^0: C^0(G;\mathcal{F}) \to C^1(G;\mathcal{F})$ acts by

\nopagebreak
\begin{equation}
(\delta^0 c)(e) = \rho_{e\leftarrow u}(c(u)) - \rho_{e\leftarrow v}(c(v))
\end{equation}

\noindent
The inner products on stalks give $\delta^0$ an adjoint $(\delta^0)^\top$. The degree-$0$ sheaf Laplacian is

\nopagebreak
\begin{equation}
\Delta^0 = (\delta^0)^\top \delta^0
\end{equation}

\begin{definition}[Sheaf inconsistency energy]
For a $0$-cochain $c\in C^0(G;\mathcal{F})$, define the sheaf inconsistency energy

\nopagebreak
\begin{equation}
E_{\mathcal{F}}(c) := \|\delta^0 c\|^2 = \langle c, \Delta^0 c\rangle
\end{equation}

\end{definition}

\begin{remark}[Relation to Dirichlet energy on graphs]
The quadratic form $E_{\mathcal F}(c)=\|\delta^0 c\|^2=\langle c,\Delta^0 c\rangle$ is a Dirichlet-type energy associated with the sheaf Laplacian. In the special case where $\mathcal F$ is the constant sheaf with stalk $\mathbb R$ and identity restrictions, $\Delta^0$ reduces to the usual graph Laplacian and $E_{\mathcal F}(c)$ becomes the classical graph Dirichlet energy $\sum_{(u,v)\in E} (c(u)-c(v))^2$. For constant stalk $\mathbb R^p$ this is the Dirichlet energy of a vector-valued graph signal, i.e.\ the sum of the scalar Dirichlet energies of its coordinates~\cite{evans1998partial}.
\end{remark}

\begin{remark}[Interpretation]
$E_{\mathcal{F}}(c)=0$ if and only if $c$ is a global section, meaning that it glues across all edges. The spectrum of $\Delta^0$ quantifies stability: a large spectral gap implies that local consistency strongly constrains global consistency.
\end{remark}

\subsection{Canonical Neighborhood Construction}

To satisfy the modality-independent site requirement, we must build $G$ without privileging any modality embedding as the ground truth.

\subsubsection*{Synthetic setting: ground-truth latent site}
In synthetic settings where ground-truth latent states $s_i$ are available with a metric $d_X$, we define a $k$-nearest-neighbor graph $G_X=(V,E_X)$ by

\nopagebreak
\begin{equation}
(i,j)\in E_X \;\Longleftrightarrow\; j \in \kNN_{k_{\mathrm{nn}}}(i; d_X)
\end{equation}

\noindent
optionally with RBF weights

\nopagebreak
\begin{equation}
w_{ij}=\exp\!\left(-\frac{d_X(s_i,s_j)^2}{\sigma^2}\right)
\end{equation}

\noindent
This yields a canonical cover $U_i=\{i\}\cup N_{G_X}(i)$.

\subsubsection*{Practical constructions without latent states}
Without ground-truth latent states, the site should be built from information that is shared across modalities and fixed before evaluating alignment. Common choices include:

\begin{itemize}
\item label or metadata graphs, where edges connect samples sharing class labels, object identity, patient identity, speaker identity, scene identity, or other external annotations;
\item temporal or causal graphs for sequences, where edges encode time adjacency, event adjacency, or known causal links;
\item externally specified relational graphs, such as geospatial proximity, ontology edges, citation links, interaction graphs, or human-curated neighborhood structures;
\item consensus graphs formed from multiple modality-specific neighborhood graphs, using union, intersection, or mutual-nearest-neighbor rules after normalization.
\end{itemize}

Consensus construction requires additional care because the graph depends on modality-specific embeddings. When no external semantic graph is available, the consensus rule must be fixed before alignment is evaluated and reported with the results.

\subsubsection*{Site sensitivity as part of the protocol}
Because obstruction is graph-relative, practical use should report the graph construction together with basic graph statistics such as edge density, connected components, weighting rule, and $\lambda_2(G)$. Sensitivity to graph choices should be treated as part of the analysis: varying $k$, edge weights, label granularity, or union/intersection rules can reveal whether a high obstruction value is stable or an artifact of the chosen site. The controlled experiments in \autoref{fig:exp-site-sensitivity} use this principle directly by changing edge placement with the data and projection family fixed.

\subsection{Projection Families and a Comparable Notion of Hardness}

We now fix the map classes used to quantify global alignment complexity. Comparability across modality pairs requires consistent normalization, complexity indices, and Lipschitz control.

\subsubsection*{Canonical whitening}

For each modality $m\in\mathcal{M}$, compute the empirical mean $\mu_m$ and covariance $\Sigma_m$ of $\{z_i^{(m)}\}_{i=1}^n$ on the training set. The whitened embeddings are

\nopagebreak
\begin{equation}
\tilde z_i^{(m)} := \Sigma_m^{-1/2}\,(z_i^{(m)}-\mu_m)
\end{equation}

Alternative normalizations, such as unit-norm embeddings for cosine geometry, are possible and must be used consistently across all modality pairs and experiments.

\subsubsection*{Nested projection classes}

Fix a shared dimension $d$ after whitening. A projection family $\{\mathcal{G}_\alpha\}_{\alpha\in\mathcal{A}}$ is chosen before computing hardness, where $\mathcal{A}\subseteq\mathbb{R}_{\geq 0}$ is ordered and the family is nested in $\alpha$. The examples below are standardized choices, each with its own nesting rule.

\paragraph*{Baseline: orthogonal linear maps (Procrustes)}

\nopagebreak
\begin{equation}
\mathcal{G}_0 := \{g(x)=Qx:\; Q\in O(d)\}
\end{equation}

\paragraph*{Class 1: low-rank linear maps with operator norm bound}
For rank parameter $r$ and Lipschitz bound $L$,

\nopagebreak
\begin{equation}
\mathcal{G}^{\mathrm{lin}}_r:=\{g(x)=Wx:\; \rank(W)\le r,\ \|W\|_2\le L\}
\end{equation}

\paragraph*{Class 2: bounded-width Lipschitz MLPs}
Fix depth $D$ and width parameter $w$. Let $\mathrm{MLP}(D,w,L)$ denote ReLU networks of depth $D$ and width $w$ with spectral norm constraints chosen so the resulting global Lipschitz constant is at most $L$. Define

\nopagebreak
\begin{equation}
\mathcal{G}^{\mathrm{mlp}}_w:=\{g\in \mathrm{MLP}(D,w,L)\}
\end{equation}

\noindent
which is nested by width.

\subsubsection*{Projection hardness}

Fix an error metric on whitened embeddings; throughout we use mean-squared error:

\nopagebreak
\begin{equation}
\mathrm{err}(g; a\to b):=\frac{1}{n}\sum_{i=1}^n \|g(\tilde z_i^{(a)})-\tilde z_i^{(b)}\|^2
\end{equation}

\begin{definition}[Projection hardness]
Given a tolerance $\varepsilon\ge 0$ and a nested family $\{\mathcal{G}_\alpha\}_{\alpha\in\mathcal{A}}$, the hardness from $a$ to $b$ is

\nopagebreak
\begin{equation}
H_{a\to b}(\varepsilon):=\inf\Big\{\alpha:\ \exists g\in \mathcal{G}_\alpha \text{ such that } \mathrm{err}(g; a\to b)\le \varepsilon\Big\}
\end{equation}

\end{definition}

When a finite hardness value is used as a witness below, we assume that the infimum is attained. This holds for the finite rank and width grids used in the controlled calibrations.

\begin{remark}[Directed hardness graph]
For fixed $\varepsilon$, hardness defines a directed weighted graph on modalities. Because whitening and the map family are fixed, $H_{a\to b}(\varepsilon)$ is comparable across different pairs $(a,b)$.
\end{remark}

\subsubsection*{Relation to learning-theoretic complexity}

Projection hardness is an empirical approximation-complexity measure for a chosen normalized family. It records the smallest class index whose empirical alignment error reaches $\varepsilon$ on the fixed sample set. The population interpretation of that result also depends on generalization within the selected class.

For this population statement, the normalization maps are treated as fixed, for example by fitting them on an independent preprocessing split. To connect the two, let $(X^{(a)},X^{(b)})$ denote a fresh paired sample after the same normalization, and define the population loss

\nopagebreak
\begin{equation}
\mathcal{E}_{\mathrm{pop}}(g) := \mathbb{E}\|g(X^{(a)})-X^{(b)}\|^2
\end{equation}

Assume that, on the support of the normalized data, $\|X^{(b)}\|\le R_b$ and $\sup_{g\in\mathcal{G}_\alpha}\|g(X^{(a)})\|\le R_{g,\alpha}$. Then the squared loss is bounded by $M_\alpha=(R_b+R_{g,\alpha})^2$. Standard Rademacher uniform-convergence bounds for the bounded loss class $\ell\circ\mathcal{G}_\alpha$ give

\nopagebreak
\begin{equation}
\mathcal{E}_{\mathrm{pop}}(g)
\le
\mathcal{E}_{\mathrm{emp}}(g)
+ 2\mathfrak{R}_n(\ell\circ\mathcal{G}_\alpha)
+ 3M_\alpha\sqrt{\frac{\log(2/\delta)}{2n}}
\end{equation}

\noindent
with probability at least $1-\delta$, uniformly over $g\in\mathcal{G}_\alpha$~\cite{bartlett2002rademacher,mohri2018foundations}, where $\mathfrak{R}_n$ denotes empirical Rademacher complexity.

\begin{proposition}[Hardness and sample complexity]
\label{prop:hardness_generalization}
Fix $\varepsilon,\eta,\delta>0$ and a nested family $\{\mathcal{G}_\alpha\}$. Suppose the above boundedness condition holds and

\nopagebreak
\begin{equation}
2\mathfrak{R}_n(\ell\circ\mathcal{G}_\alpha)
+3M_\alpha\sqrt{\frac{\log(2/\delta)}{2n}}
\le \eta
\end{equation}

If the empirical hardness satisfies $H_{a\to b}(\varepsilon)\le \alpha$ and the hardness infimum is attained, then there exists $g\in\mathcal{G}_\alpha$ with population alignment error at most $\varepsilon+\eta$.
\end{proposition}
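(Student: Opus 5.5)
The argument combines the attained hardness witness with the uniform Rademacher bound stated just before the proposition; nestedness lets us move from the hardness index up to the level $\alpha$ at which the complexity condition is assumed.

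\emph{Step 1: a witness in $\mathcal{G}_\alpha$.}
Set $\alpha^\ast:=H_{a\to b}(\varepsilon)$. Because the infimum is attained, there is $g^\ast\in\mathcal{G}_{\alpha^\ast}$ with $\mathrm{err}(g^\ast;a\to b)\le\varepsilon$. Since $\alpha^\ast\le\alpha$ and the family is nested, $g^\ast\in\mathcal{G}_\alpha$. The empirical error $\mathrm{err}(g^\ast;a\to b)$ is exactly $\mathcal{E}_{\mathrm{emp}}(g^\ast)$ on the normalized sample, so $\mathcal{E}_{\mathrm{emp}}(g^\ast)\le\varepsilon$.

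\emph{Step 2: apply the uniform bound.}
Under the boundedness assumptions, the loss class $\ell\circ\mathcal{G}_\alpha$ takes values in $[0,M_\alpha]$. The displayed Rademacher inequality therefore holds uniformly over $g\in\mathcal{G}_\alpha$ on an event of probability at least $1-\delta$. On that event we specialize it to $g^\ast$ and use the hypothesis on $\mathfrak{R}_n$ and $M_\alpha$:
\[
\mathcal{E}_{\mathrm{pop}}(g^\ast)\le \mathcal{E}_{\mathrm{emp}}(g^\ast)+2\mathfrak{R}_n(\ell\circ\mathcal{G}_\alpha)+3M_\alpha\sqrt{\tfrac{\log(2/\delta)}{2n}}\le\varepsilon+\eta .
\]
Thus, with probability at least $1-\delta$, the map $g^\ast\in\mathcal{G}_\alpha$ has population alignment error at most $\varepsilon+\eta$. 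This is how I read the existence claim: the probabilistic qualifier is inherited from the uniform bound.

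\emph{Main obstacle.}
The point needing care is that $g^\ast$ is chosen from the same sample used for evaluation, so a fixed-$g$ concentration bound would not apply. Uniformity over $\mathcal{G}_\alpha$ is what justifies plugging in a data-dependent witness, and it is the reason the complexity condition must be imposed at a fixed level $\alpha$, not at the data-dependent level $\alpha^\ast$.

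A second point is that the normalization maps are fixed independently of the sample, as stipulated before the proposition. This makes the whitened pairs i.i.d., so the cited bound is valid. If $\mathfrak{R}_n$ is the empirical Rademacher complexity, the bound is itself data-dependent. In that case the hypothesis is read as holding on the realized sample, and the conclusion is unchanged.
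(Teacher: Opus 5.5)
Your proof is correct and follows the paper's argument: attainment plus nesting gives a witness in $\mathcal{G}_\alpha$ with empirical error at most $\varepsilon$, and the uniform Rademacher bound over $\mathcal{G}_\alpha$, combined with the assumed bound on the complexity term, gives population error at most $\varepsilon+\eta$. Your explicit ``with probability at least $1-\delta$'' qualifier is the right reading of the conclusion; the paper leaves it implicit, inheriting it from the displayed bound.
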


\begin{proof}
By attainment and nesting, $H_{a\to b}(\varepsilon)\le\alpha$ implies that some $g\in\mathcal{G}_\alpha$ has empirical error at most $\varepsilon$. The Rademacher bound above holds uniformly over $\mathcal{G}_\alpha$, so the same $g$ has population error at most its empirical error plus the displayed complexity term. The assumed bound on that term gives $\mathcal{E}_{\mathrm{pop}}(g)\le\varepsilon+\eta$.
\end{proof}

For a finite candidate set $\mathcal{A}_0\subset\mathcal{A}$, the conclusion holds simultaneously for every $\alpha\in\mathcal{A}_0$ by replacing $\delta$ with $\delta/|\mathcal{A}_0|$ in the displayed bound.

Projection hardness and Rademacher complexity therefore play complementary roles. Hardness identifies the first class with sufficient approximation power for the observed cross-modal relation, and the complexity term determines the sample requirement for that class to generalize. For the low-rank linear family with $\|W\|_2\le L$ and $\rank(W)\le r$, the Frobenius norm is at most $L\sqrt r$, so standard linear-class bounds grow with the rank index through a $\sqrt r$ factor, up to input and loss constants. For norm-controlled width-$w$ ReLU families, the complexity term depends on norm products and the network parameterization~\cite{bartlett2017spectrally,golowich2018size}. Accordingly, the class index reported by hardness also determines the statistical-complexity term associated with the empirical alignment result.

\subsubsection*{Projection-Parameter Sheaf and Sheaf-Laplacian Obstruction}

Hardness concerns a global map. The obstruction construction attaches local projection parameters to the vertices of the fixed site and asks whether those local choices can be made coherent across edges. Fix a declared local projection family $\mathcal{G}_{\mathrm{loc}}=\{g_w:w\in\mathbb{R}^p\}$. The family and its parameterization are held fixed across modality pairs being compared. A field $w=\{w_v\}_{v\in V}$ assigns a local projection $g_{w_v}$ to each vertex.

Neighboring parameter choices are compared by the restriction maps of the sheaf. For a general edge $e=(u,v)$, the edge discrepancy has the form

\nopagebreak
\begin{equation}
\left\|\rho_{e\leftarrow u}(w_u)-\rho_{e\leftarrow v}(w_v)\right\|^2
\end{equation}

The common-parameter construction formalized in~\autoref{def:projection_parameter_sheaf} uses $\mathbb{R}^p$ for every vertex and edge stalk, with identity restriction maps. Local projections are then compared in one shared parameter coordinate system, giving the vector-valued graph Dirichlet energy of the parameter field.

\subsubsection*{Projection-parameter sheaf}

\begin{definition}[Projection-parameter sheaf]
\label{def:projection_parameter_sheaf}
Let $G=(V,E)$ be the fixed base graph. The projection-parameter sheaf $\mathcal{P}$ on $G$ is defined by:

\begin{itemize}
\item Vertex stalks: $\mathcal{P}(v)=\mathbb{R}^p$ for all $v\in V$.
\item Edge stalks: $\mathcal{P}(e)=\mathbb{R}^p$ for all $e\in E$.
\item Restrictions: identity maps $\rho_{e\leftarrow v}=\mathrm{Id}_{\mathbb{R}^p}$ for all incidences.
\end{itemize}

\end{definition}

\noindent
For a $0$-cochain $w=\{w_v\}_{v\in V}$, the coboundary is

\nopagebreak
\begin{equation}
(\delta^0 w)(e=(u,v))=w_u-w_v
\end{equation}

\noindent
and the sheaf energy is

\nopagebreak
\begin{equation}
E_{\mathcal{P}}(w)=\sum_{(u,v)\in E}\|w_u-w_v\|^2
\end{equation}

\noindent
This energy is the $0$-Laplacian inconsistency energy of the projection-parameter sheaf. Zero energy means the local parameter choices glue over the site; on a connected site this is exactly a constant parameter field, corresponding to one global projection map.

\begin{proposition}[Global sections are constant parameter fields]
\label{prop:global_sections}
If $G$ is connected, then $\ker(\Delta^0)=\{w:\ w_v\equiv \bar w \text{ for all } v\in V\}$.
More generally, on a graph with $k$ connected components, $\ker(\Delta^0)$ is the space of parameter fields constant on each component.
\end{proposition}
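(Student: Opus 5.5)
The plan is to reduce the kernel computation to the vanishing of the energy, using $\Delta^0=(\delta^0)^\top\delta^0$. First I would show that $\ker(\Delta^0)=\ker(\delta^0)$. One inclusion is immediate: if $\delta^0 w=0$ then $\Delta^0 w=(\delta^0)^\top\delta^0 w=0$. For the converse, if $\Delta^0 w=0$ then
\[
0=\langle w,\Delta^0 w\rangle=\|\delta^0 w\|^2=E_{\mathcal P}(w).
\]
This step uses only the adjoint supplied by the inner products on the stalks of $\mathbf{Hilb}_{\mathrm{fd}}$. The kernel of the Laplacian is therefore exactly the set of $0$-cochains of zero sheaf inconsistency energy.

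Next I would use the explicit form $E_{\mathcal P}(w)=\sum_{(u,v)\in E}\|w_u-w_v\|^2$, which holds for the identity restrictions of \autoref{def:projection_parameter_sheaf}. It is a sum of nonnegative terms. Hence $E_{\mathcal P}(w)=0$ if and only if $w_u=w_v$ for every edge $(u,v)\in E$. If the graph is weighted, the same argument applies provided all edge weights are strictly positive, so each term is again forced to vanish. The characterization therefore becomes purely combinatorial: $w$ is in the kernel exactly when it agrees across every edge.

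Finally I would propagate equality along paths. Suppose $u$ and $v$ lie in the same connected component. Then there is a path $u=v_0,v_1,\dots,v_m=v$ in $E$, and chaining the edgewise equalities gives $w_u=w_{v_1}=\dots=w_v$. So $w$ is constant on each component. Conversely, a field constant on each component has $w_u=w_v$ on every edge, because both endpoints of an edge lie in the same component, and so it has zero energy. This gives the general statement. The connected case is the special case $k=1$, where the kernel is $\{w: w_v\equiv\bar w\}$.

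As a dimension check, the kernel is then isomorphic to $(\mathbb{R}^p)^k$, via the choice of one vector per component. No step is a genuine obstacle. The only point needing care is the first step, the equality $\ker\Delta^0=\ker\delta^0$, which relies on positive definiteness of the inner product; the rest is path chaining.
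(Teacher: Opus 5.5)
Your proof is correct and matches the argument the paper relies on: the paper gives no separate proof, only the observation that $E_{\mathcal{P}}(w)=\langle w,\Delta^0 w\rangle$ vanishes exactly when $w_u=w_v$ on every edge, with connectedness then forcing a constant field. Your steps $\ker\Delta^0=\ker\delta^0$ and path chaining make this explicit, and the paper's identity $\Delta^0=L\otimes I_p$, with $\ker L$ spanned by the component indicator vectors, gives the same $(\mathbb{R}^p)^k$ description.
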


\subsubsection*{Role of the sheaf formulation}

For the identity-restriction construction in~\autoref{def:projection_parameter_sheaf}, $\Delta^0=L\otimes I_p$, and the computation coincides exactly with vector-valued graph Dirichlet regularization. This baseline can therefore be computed with standard graph-regularization methods. The sheaf formulation identifies the kernel with the space of global sections, so zero energy means that the locally chosen maps glue to a global map. It also retains the same definition when neighborhoods use different parameter spaces or when parameters require transport before comparison. In general, one may take vertex stalks $\mathbb R^{p_v}$, an edge stalk $\mathbb R^{r_e}$, and incidence-specific maps $T_{e\leftarrow v}:\mathbb R^{p_v}\to\mathbb R^{r_e}$, producing the discrepancy

\nopagebreak
\begin{equation}
\left\|T_{e\leftarrow u}w_u-T_{e\leftarrow v}w_v\right\|^2
\end{equation}

When $p_u\ne p_v$, the identity penalty $\|w_u-w_v\|^2$ has no common parameter space in which to form the difference. When the dimensions agree, the same penalty cannot encode edge-dependent changes of coordinates.

\paragraph*{Concrete non-identity example.}
Suppose adjacent neighborhoods describe the same one-dimensional local projection in coordinate frames with opposite orientations. On their common edge, take $\rho_{e\leftarrow u}=+1$ and $\rho_{e\leftarrow v}=-1$. Intrinsic agreement is then $w_u=-w_v$, for which the sheaf discrepancy $(w_u+w_v)^2$ vanishes. Identity graph regularization assigns the positive penalty $(w_u-w_v)^2=4w_u^2$ to the same field. These signs, as well as higher-dimensional transport matrices, define connection-Laplacian cases of the non-identity restriction-map structure represented uniformly by a cellular sheaf. The controlled calibrations below use identity restrictions so that their obstruction values have exact analytic references. The parameter coordinates, stalk inner products, and restriction maps are part of the obstruction specification and are held fixed across modality pairs.

\subsubsection*{Obstruction invariant at fixed mean error tolerance}

Fix a loss $\ell(\cdot,\cdot)$, typically squared error on whitened embeddings. For any parameter field $w\in(\mathbb R^p)^V$, define

\nopagebreak
\begin{align}
\Err^{\mathrm{local}}_{a\to b}(w)
&:=\frac{1}{n}\sum_{v\in V}\ell\!\left(g_{w_v}(\tilde z_v^{(a)}),\tilde z_v^{(b)}\right)\\
\Var_{a\to b}(w)
&:=E_{\mathcal P}(w)
=\sum_{(u,v)\in E}\|w_u-w_v\|^2
\end{align}

\begin{definition}[Sheaf-Laplacian obstruction]
Fix a target mean-error tolerance $\varepsilon\ge 0$. Define

\nopagebreak
\begin{equation}
\label{eq:obstruction_constrained}
C_{a\to b}^{\mathcal{G}_{\mathrm{loc}}}(\varepsilon)
:=
\inf_{w\in(\mathbb R^p)^V}
\left\{
E_{\mathcal P}(w):
\frac{1}{n}\sum_{v\in V}
\ell\!\left(g_{w_v}(\tilde z_v^{(a)}),\tilde z_v^{(b)}\right)
\le\varepsilon
\right\}
\end{equation}

\end{definition}

We use the convention $\inf\varnothing=+\infty$. Equation~\eqref{eq:obstruction_constrained} optimizes over parameter fields directly. Nonunique optimizers do not alter the invariant, and its definition does not require a regularization path to recover every constrained optimum. The factor $1/n$ places its tolerance on the same mean-loss scale as projection hardness.

We suppress the superscript when the declared local family is fixed by context. Projection hardness and obstruction are therefore evaluated relative to separately declared global and local specifications.

\begin{remark}[Specification and coupling]
Projection hardness is defined relative to the declared global hierarchy $\{\mathcal{G}_\alpha\}$, whereas obstruction is defined relative to the declared local family $\mathcal{G}_{\mathrm{loc}}$ and its parameter sheaf. These choices need not coincide. Under identity restrictions, if $\mathcal{G}_{\mathrm{loc}}=\mathcal{G}_\alpha$ with the same parameterization, sample loss, and tolerance, then $H_{a\to b}(\varepsilon)\leq\alpha$ and attainment of the hardness infimum yield a constant feasible field, so $C_{a\to b}(\varepsilon)=0$. The two quantities are therefore distinct, but they are not independent under this shared-family specialization.
\end{remark}

\subsubsection*{Lagrangian computation and diagnostics}

A standard computational method is the weighted Lagrangian objective

\nopagebreak
\begin{equation}
\label{eq:sheaf_objective}
\mathcal{L}(w;\lambda)
=
\Err^{\mathrm{local}}_{a\to b}(w)
+
\lambda E_{\mathcal P}(w)
\qquad \lambda\ge 0
\end{equation}

\noindent
with a selected global minimizer

\nopagebreak
\begin{equation}
w^\ast(\lambda)\in \argmin_{w\in (\mathbb{R}^p)^V}\mathcal{L}(w;\lambda)
\end{equation}

\noindent
The constant-field benchmark is measured on the same scale,

\nopagebreak
\begin{equation}
\Err^{\mathrm{global}}_{a\to b}
:=\inf_{\theta\in \mathbb R^p}
\frac{1}{n}\sum_{v\in V}\ell\!\left(g_{\theta}(\tilde z_v^{(a)}),\tilde z_v^{(b)}\right)
\end{equation}

The pair $\big(\Err^{\mathrm{local}}_{a\to b}(w^\ast(\lambda)),\Var_{a\to b}(w^\ast(\lambda))\big)$ traces the fit--variation tradeoff. When the argmin is nonunique, any selected minimizer gives a point on this computational path. The constrained definition in~\eqref{eq:obstruction_constrained} remains unchanged, and the selection rule should be stated for reproducibility.

Equation~\eqref{eq:sheaf_objective} provides a computational scalarization of the constrained definition. Suppose the mean loss and energy are convex and lower semicontinuous in $w$, the constrained problem is feasible and attains its infimum, and a constraint qualification such as Slater's condition holds. Strong duality then recovers a constrained solution through a suitable regularization weight or an endpoint limit, up to the reciprocal scaling between the two conventional Lagrangian forms. For nonconvex neural parameterizations or inexact local optimization, the regularization path provides a computable approximation. Every returned field satisfying the mean-error constraint certifies the upper bound $C_{a\to b}(\varepsilon)\le E_{\mathcal P}(w)$.

\begin{remark}[Two failure modes]
Hardness failure occurs when the best global map in a low-complexity class cannot reach error $\varepsilon$. Obstruction failure occurs when reaching $\varepsilon$ with locally varying maps in the declared local family requires large variation energy over $G$.
\end{remark}

\autoref{fig:hardness} summarizes the two-axis diagnostic picture for the declared global hierarchy and local family. Projection hardness measures the global complexity needed to reach the target error. Sheaf-Laplacian obstruction measures the parameter variation required by locally fit maps. Low values on both axes indicate compatibility at the chosen tolerance; high values identify the corresponding failure mode. When the local family coincides with a reference hardness class, the coupling remark restricts the admissible regimes.

\begin{figure*}[!ht]
  \centering
  \includegraphics[width=0.5\textwidth]{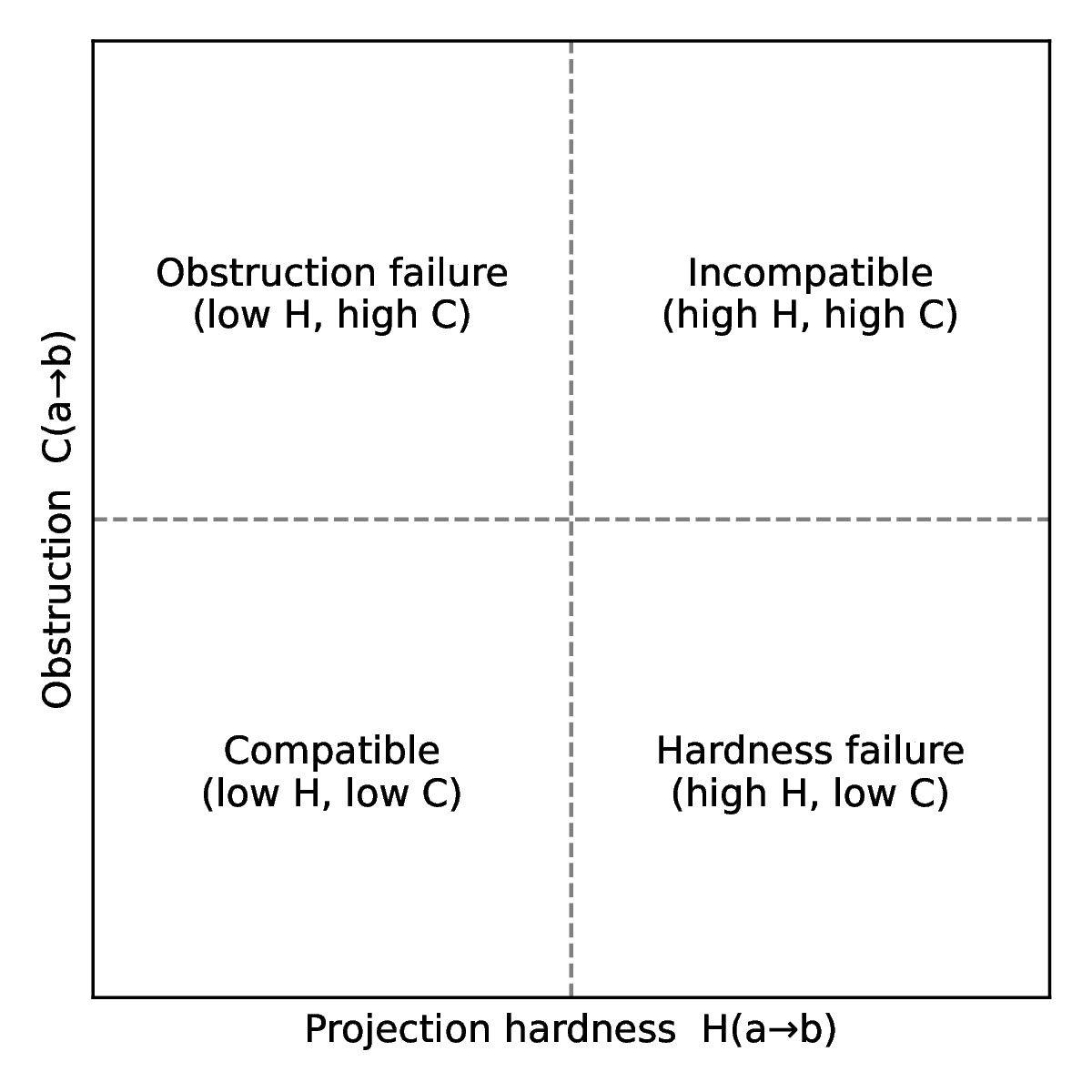}
  \caption{Diagnostic regimes defined by projection hardness and sheaf-Laplacian obstruction under declared global and local specifications.}
  \label{fig:hardness}
\end{figure*}

\autoref{fig:field} gives the parameter-field view of the obstruction invariant. A constant field corresponds to a single global map. A smoothly varying field has low obstruction. A sharp change across a cut produces large obstruction even when local fits are accurate.

\begin{figure*}[!ht]
  \centering
  \includegraphics[width=0.85\textwidth]{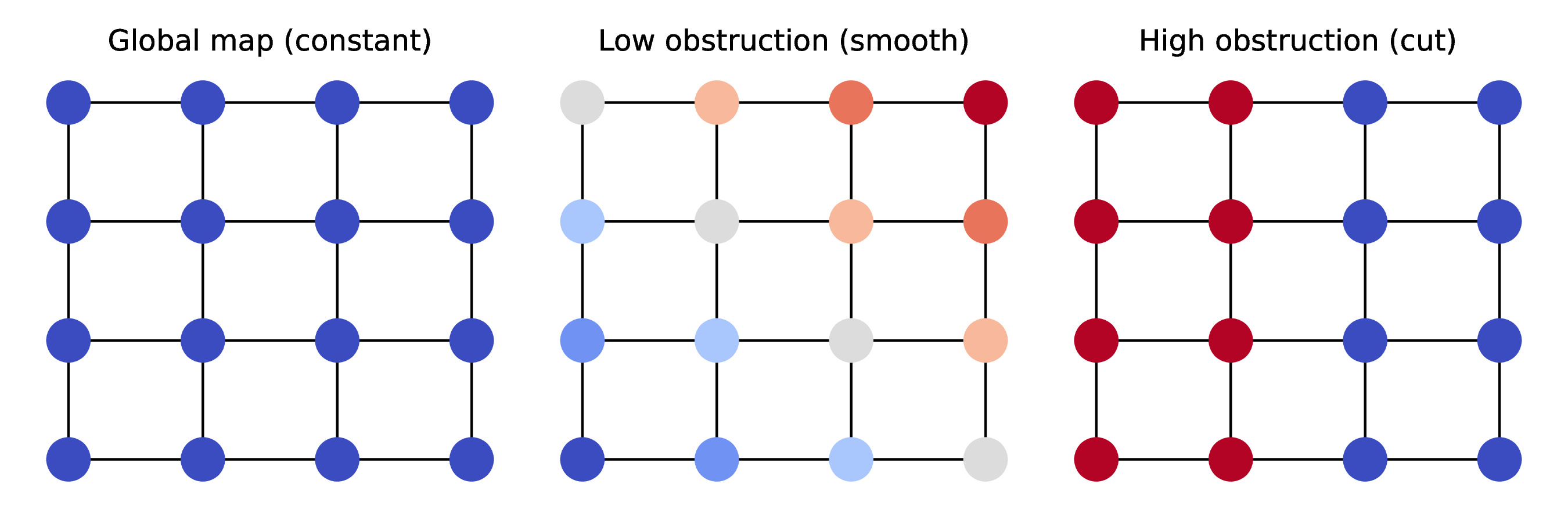}
  \caption{Parameter-field view of the sheaf-Laplacian obstruction invariant.}
  \label{fig:field}
\end{figure*}

\subsection{Spectral Gap, Stability, and a Link Between Obstruction and Global Error}

The stability results below use the identity-restriction baseline, for which $\Delta^0$ reduces to a vector-valued graph Laplacian.

\subsubsection*{Poincar\'e-type control by the spectral gap}

Let $L$ denote the (combinatorial) graph Laplacian of $G$. For vector-valued fields $w\in(\mathbb R^p)^V$, the quadratic form $\sum_{(u,v)\in E}\|w_u-w_v\|^2$ is the Dirichlet form associated with the graph Laplacian, and can be written as $\langle w,(L\otimes I_p)w\rangle$~\cite{evans1998partial}.

Let $\lambda_2(G)$ denote the second-smallest eigenvalue of $L$ (the algebraic connectivity), assuming $G$ is connected.

\begin{lemma}[Vector-valued Poincar\'e inequality on graphs]
\label{lem:poincare}
If $G$ is connected, then for any $w\in (\mathbb{R}^p)^V$ and its vertexwise mean $\bar w:=\frac{1}{n}\sum_{v\in V} w_v$,

\nopagebreak
\begin{equation}
\sum_{v\in V}\|w_v-\bar w\|^2 \;\le\; \frac{1}{\lambda_2(G)}\sum_{(u,v)\in E}\|w_u-w_v\|^2
\end{equation}

\end{lemma}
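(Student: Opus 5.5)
The plan is to reduce the vector-valued statement to the scalar Courant--Fischer characterization of $\lambda_2(G)$, applied coordinatewise. For each coordinate $k\in\{1,\dots,p\}$ I will consider the scalar signal $f^{(k)}\in\mathbb R^V$ with $f^{(k)}(v)=(w_v)_k$. Both sides of the inequality split as sums over coordinates:
\begin{equation}
\sum_{v\in V}\|w_v-\bar w\|^2=\sum_{k=1}^p\sum_{v\in V}\bigl(f^{(k)}(v)-\bar f^{(k)}\bigr)^2,
\qquad
\sum_{(u,v)\in E}\|w_u-w_v\|^2=\sum_{k=1}^p \langle f^{(k)},Lf^{(k)}\rangle,
\end{equation}
where $\bar f^{(k)}=(\bar w)_k$. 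This is the identity $\Delta^0=L\otimes I_p$ noted earlier. It therefore suffices to prove the scalar inequality $\|f-\bar f\mathbf 1\|^2\le \lambda_2(G)^{-1}\langle f,Lf\rangle$ for every $f\in\mathbb R^V$, and then sum it over $k$.

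For the scalar step, I set $g:=f-\bar f\mathbf 1$. Then $g\perp\mathbf 1$, because $\sum_v g(v)=0$. Also $\langle g,Lg\rangle=\langle f,Lf\rangle$, because $L\mathbf 1=0$ and $L$ is symmetric. The combinatorial Laplacian $L$ is symmetric positive semidefinite, so it has an orthonormal eigenbasis $\phi_1,\dots,\phi_n$ with eigenvalues $0=\lambda_1\le\lambda_2\le\dots\le\lambda_n$. Here I take $\phi_1=\mathbf 1/\sqrt n$.

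Connectedness of $G$ makes the kernel of $L$ exactly $\operatorname{span}(\mathbf 1)$; this is the scalar case $p=1$ of \autoref{prop:global_sections}. Hence $\lambda_2(G)>0$. Expanding $g=\sum_{j\ge2}c_j\phi_j$ gives
\begin{equation}
\langle g,Lg\rangle=\sum_{j\ge2}\lambda_j c_j^2\ge\lambda_2\sum_{j\ge2}c_j^2=\lambda_2\|g\|^2 .
\end{equation}
Dividing by $\lambda_2(G)$ gives the scalar inequality, and summing over coordinates finishes the proof.

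There is no serious obstacle. The only points needing care are the following.
\begin{itemize}
\item The projection onto $\mathbf 1^\perp$ must be the unweighted vertexwise mean, which matches the definition of $\bar w$ in the statement.
\item The energy $\sum_{(u,v)\in E}(f(u)-f(v))^2$ must equal $\langle f,Lf\rangle$ with each undirected edge counted once, which holds for $L=D-A$.
\item Connectedness is what guarantees $\lambda_2(G)>0$, so the division is legitimate.
\end{itemize}
If edge weights are used, the same argument works verbatim with the weighted Laplacian and weighted energy.
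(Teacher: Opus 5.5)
Your proof is correct and follows the same route as the paper: center each scalar coordinate signal, expand it in an orthonormal eigenbasis of $L$ with $\mathbf 1/\sqrt n$ as the first vector, bound $\sum_{k\ge 2}\lambda_k\alpha_k^2$ below by $\lambda_2(G)\|f_\perp\|^2$, and sum over coordinates. Your version also states explicitly several steps that the paper's final ``Hence'' leaves implicit: the coordinatewise splitting via $\Delta^0=L\otimes I_p$, the identity $\langle f_\perp,Lf_\perp\rangle=\langle f,Lf\rangle$ coming from $L\mathbf 1=0$, and the reason connectedness gives $\lambda_2(G)>0$.
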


\begin{proof}
Let $0=\lambda_1<\lambda_2\le \cdots \le \lambda_n$ be the eigenvalues of $L$,
with orthonormal eigenvectors $\{u_k\}_{k=1}^n$ and
$u_1=n^{-1/2}\mathbf{1}$. For a scalar field $f\in\mathbb{R}^V$ with mean
$\bar f$, set $f_\perp=f-\bar f\mathbf{1}$. Then $f_\perp\perp\mathbf{1}$ and

\nopagebreak
\begin{equation}
\label{eq:graph_poincare_scalar}
\begin{aligned}
f_\perp
&=\sum_{k=2}^n \alpha_k u_k \\
\|f_\perp\|^2
&=\sum_{k=2}^n \alpha_k^2 \\
f_\perp^\top L f_\perp
&=\sum_{k=2}^n \lambda_k \alpha_k^2
\ge
\lambda_2(G)\sum_{k=2}^n \alpha_k^2
=
\lambda_2(G)\|f_\perp\|^2 \\
\text{Hence,}\quad \sum_{v\in V}\|w_v-\bar w\|^2
&\le
\frac{1}{\lambda_2(G)}
\sum_{(u,v)\in E}\|w_u-w_v\|^2
\end{aligned}
\end{equation}

\end{proof}

\subsubsection*{From small obstruction to near-global parameter consistency}

The Poincar\'e inequality states that small variation energy forces $w$ to be close to a constant parameter field, provided $\lambda_2(G)$ is bounded away from zero. To translate this into alignment consequences we assume mild regularity of the loss with respect to parameters.

\begin{definition}[Parameter-Lipschitz loss]
The per-sample loss $\ell(g_w(x),y)$ is $L_w$-Lipschitz in $w$ (uniformly over relevant $x,y$) if

\nopagebreak
\begin{equation}
\Big|\ell(g_{w}(x),y)-\ell(g_{w'}(x),y)\Big|\le L_w\|w-w'\|
\end{equation}

\noindent
for all $w,w',x,y$ under consideration.
\end{definition}

This is a modeling assumption; in practice it can be enforced or approximated by parameter norm control and Lipschitz constraints on $g_w$.

\begin{theorem}[Obstruction controls excess global-map error]
\label{thm:obstruction_to_global}
Assume $G$ is connected and the per-sample loss is $L_w$-Lipschitz in $w$.
Let $w=\{w_v\}_{v\in V}$ be any parameter field and let $\bar w$ be its mean.
Then

\nopagebreak
\begin{equation}
\label{eq:obstruction_to_global}
\begin{aligned}
\frac{1}{n}\sum_{v\in V}
\ell\!\left(g_{\bar w}(\tilde z_v^{(a)}), \tilde z_v^{(b)}\right)
&\le
\frac{1}{n}\sum_{v\in V}
\ell\!\left(g_{w_v}(\tilde z_v^{(a)}), \tilde z_v^{(b)}\right) \\
&\quad+
L_w
\sqrt{
\frac{1}{n\lambda_2(G)}
\sum_{(u,v)\in E}
\|w_u-w_v\|^2
}
\end{aligned}
\end{equation}

\end{theorem}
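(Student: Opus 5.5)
The plan is to compare the two averages term by term, apply the parameter-Lipschitz assumption vertexwise, and then pass from an average of distances to a root-mean-square so that the Poincar\'e inequality of \autoref{lem:poincare} can be invoked.

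First, for each vertex $v$, the parameter-Lipschitz assumption with $w=\bar w$ and $w'=w_v$ gives
\begin{equation}
\ell\!\left(g_{\bar w}(\tilde z_v^{(a)}),\tilde z_v^{(b)}\right)\le \ell\!\left(g_{w_v}(\tilde z_v^{(a)}),\tilde z_v^{(b)}\right)+L_w\|w_v-\bar w\|.
\end{equation}
This requires that $\bar w$ lies among the parameters "under consideration". I will note that $\bar w$ is a convex combination of the $w_v$, so it lies in any convex parameter region on which the assumption is imposed. Averaging over $v$ with the factor $1/n$ then bounds the global-map mean loss by the local mean loss plus $\frac{L_w}{n}\sum_v\|w_v-\bar w\|$.

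Second, I convert the $\ell^1$-type average into an $\ell^2$ quantity. By Cauchy--Schwarz (or Jensen),
\begin{equation}
\frac{1}{n}\sum_{v\in V}\|w_v-\bar w\|\le\sqrt{\frac{1}{n}\sum_{v\in V}\|w_v-\bar w\|^2}.
\end{equation}

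Third, \autoref{lem:poincare} applies because $G$ is connected and $\bar w$ is exactly the vertexwise mean used there. It bounds $\sum_v\|w_v-\bar w\|^2$ by $\lambda_2(G)^{-1}\sum_{(u,v)\in E}\|w_u-w_v\|^2$. Substituting this under the square root gives precisely the term $L_w\sqrt{\frac{1}{n\lambda_2(G)}\sum_{(u,v)\in E}\|w_u-w_v\|^2}$, which proves \eqref{eq:obstruction_to_global}.

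No step is a serious obstacle; the only points needing care are these two:
\begin{itemize}
\item \textbf{Normalization.} The $1/n$ must be placed inside the square root via Cauchy--Schwarz rather than left outside, since the latter would give a weaker $1/\sqrt{n}$-mismatched bound.
\item \textbf{Domain of the Lipschitz assumption.} The assumption must be justified at the averaged parameter $\bar w$, which holds whenever it is posed on a convex set containing the field.
\end{itemize}
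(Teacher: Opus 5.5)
Your proposal is correct and matches the paper's proof step for step: the parameter-Lipschitz bound at each vertex between $\bar w$ and $w_v$, averaging followed by Cauchy--Schwarz to pass to the root-mean-square deviation, and then the vector-valued Poincar\'e inequality to bound $\sum_v\|w_v-\bar w\|^2$ by $E_{\mathcal P}(w)/\lambda_2(G)$. Your remark that $\bar w$ must lie where the Lipschitz assumption holds (automatic on a convex region containing the field) is a sensible detail that the paper leaves implicit.
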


\begin{proof}
By Lipschitzness,

\vspace{1em}
\nopagebreak
\begin{equation}
\ell(g_{\bar w}(x_v),y_v)\le \ell(g_{w_v}(x_v),y_v)+L_w\|\bar w-w_v\|
\end{equation}

\vspace{1em}
\noindent
Averaging over $v$ and applying Cauchy–Schwarz:

\vspace{1em}
\nopagebreak
\begin{equation}
\frac{1}{n}\sum_v \|\bar w-w_v\|
\le
\sqrt{\frac{1}{n}\sum_v\|\bar w-w_v\|^2}
\end{equation}

\vspace{1em}
\noindent
Now apply \autoref{lem:poincare} to bound $\sum_v\|\bar w-w_v\|^2$ by $E_{\mathcal P}(w)/\lambda_2(G)$.
\end{proof}

\begin{remark}[Interpretation]
\autoref{thm:obstruction_to_global} shows that if a locally varying fit achieves small local error and small variation energy, then a single global parameter vector (the mean) achieves nearly the same error, with an explicit degradation controlled by the spectral gap $\lambda_2(G)$. In particular, if $C_{a\to b}(\varepsilon)<\infty$, applying the theorem to a feasible sequence approaching the infimum in~\eqref{eq:obstruction_constrained} gives

\nopagebreak
\begin{equation}
\Err^{\mathrm{global}}_{a\to b}
\le
\varepsilon+L_w\sqrt{\frac{C_{a\to b}(\varepsilon)}{n\lambda_2(G)}}
\end{equation}

Thus small obstruction certifies that a feasible local field can be replaced by a single global parameter with controlled excess error within the declared local family.
\end{remark}

\subsection{Compatibility Profiles and Directed Non-Transitivity}

The framework gives a compatibility profile for each directed pair $(a\to b)$ at tolerance $\varepsilon$:

\nopagebreak
\begin{equation}
\big(H_{a\to b}(\varepsilon),\ C_{a\to b}(\varepsilon)\big)
\end{equation}

Thresholding these quantities induces binary relations (``compatible'' vs.\ ``incompatible''). The directed weighted profile remains the primary object.

\subsubsection*{A minimal compatibility relation}
Fix thresholds $\alpha_0$ and $\tau_0$. Define

\nopagebreak
\begin{equation}
a \xRightarrow[\varepsilon]{(\alpha_0,\tau_0)} b
\iff
H_{a\to b}(\varepsilon)\le \alpha_0\ \ \text{and}\ \ C_{a\to b}(\varepsilon)\le \tau_0
\end{equation}

When $\mathcal{G}_{\mathrm{loc}}=\mathcal{G}_{\alpha_0}$ under the assumptions of the coupling remark, the obstruction threshold is redundant for pairs satisfying the hardness condition. Otherwise, the two thresholds refer to the separately declared global and local specifications.

Even with this restrictive definition, compatibility can fail to be transitive. \autoref{fig:bridge} illustrates the mechanism used below. An intermediate modality can make the two stages $a\to c$ and $c\to b$ low-complexity even when the direct map $a\to b$ requires a much larger family.

\begin{figure*}[!ht]
  \centering
  \includegraphics[width=0.7\textwidth]{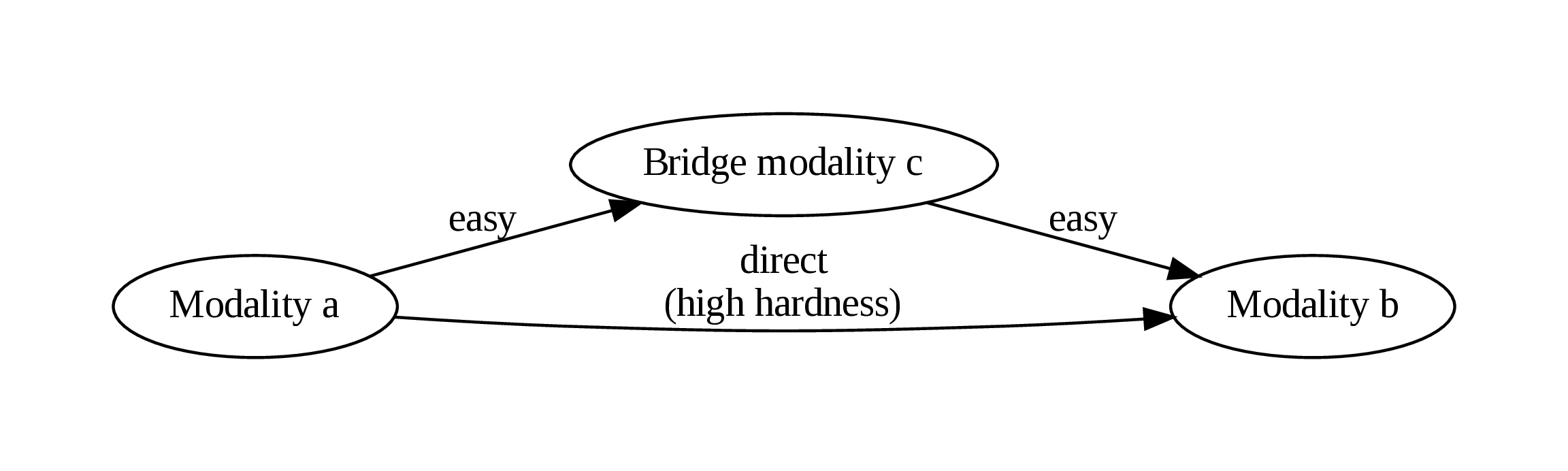}
    \caption{Bridging via an intermediate modality.}
  \label{fig:bridge}
\end{figure*}

\subsection{Bridging via composed projection families}

Let $a,b,c$ be three modalities. Consider a two-stage alignment through $c$. For a family $\mathcal{G}$, define the composed family

\nopagebreak
\begin{equation}
\mathcal{G}_{a\to c\to b}:=\{g_{c\to b}\circ g_{a\to c}:\ g_{a\to c}\in\mathcal{G},\ g_{c\to b}\in\mathcal{G}\}
\end{equation}

\noindent
Because the family can depend on a complexity index, we make this explicit.

\begin{definition}[Composed hardness]
Fix a tolerance $\varepsilon$. The two-stage hardness is

\nopagebreak
\begin{equation}
\begin{aligned}
H_{a\to c\to b}(\varepsilon)
:=
\inf\Big\{\alpha: \exists g_{a\to c}\in \mathcal{G}_\alpha,\ \exists g_{c\to b}\in \mathcal{G}_\alpha \\
\xRightarrow \mathrm{err}(g_{c\to b}\circ g_{a\to c};a\to b)\le \varepsilon\Big\}
\end{aligned}
\end{equation}

\end{definition}

Direct and stagewise obstruction values below use declared local families on the same site, with the same parameter coordinates, stalk inner products, and restriction conventions wherever values are compared.

\begin{definition}[Stagewise obstruction]
Let $w^{a\to c}$ and $w^{c\to b}$ be parameter fields for the two stages, with projection-parameter sheaves $\mathcal P_{a\to c}$ and $\mathcal P_{c\to b}$ on the same site. Define

\nopagebreak
\begin{equation}
\begin{aligned}
C_{a\to c\to b}(\varepsilon)
:=\inf_{w^{a\to c},w^{c\to b}}
\Bigg\{&E_{\mathcal P_{a\to c}}(w^{a\to c})
+E_{\mathcal P_{c\to b}}(w^{c\to b}):\\
&\frac{1}{n}\sum_{v\in V}\ell\!\left(
g_{w^{c\to b}_v}
\!\left(g_{w^{a\to c}_v}(\tilde z_v^{(a)})\right),
\tilde z_v^{(b)}
\right)\le\varepsilon\Bigg\}
\end{aligned}
\end{equation}

\end{definition}

As in the direct case, a practical scalarization jointly minimizes the mean composed loss plus $\lambda_1E_{\mathcal P_{a\to c}}+\lambda_2E_{\mathcal P_{c\to b}}$. It is exact under the corresponding convexity and duality conditions; in nonconvex settings, every feasible returned pair provides a computable upper bound on the constrained stagewise obstruction.

\begin{definition}[Hardness bridge]
Given a comparison factor $\eta_H\in(0,1)$, modality $c$ is an $\eta_H$-hardness bridge for $(a,b)$ at tolerance $\varepsilon$ if

\nopagebreak
\begin{equation}
H_{a\to c\to b}(\varepsilon)\le \eta_H H_{a\to b}(\varepsilon)
\end{equation}

\end{definition}

\begin{definition}[Full bridge]
Given comparison factors $\eta_H,\eta_C\in(0,1)$, modality $c$ is an $(\eta_H,\eta_C)$-full bridge for $(a,b)$ at tolerance $\varepsilon$ if

\nopagebreak
\begin{equation}
H_{a\to c\to b}(\varepsilon)\le \eta_H H_{a\to b}(\varepsilon)
\qquad
C_{a\to c\to b}(\varepsilon)\le \eta_C C_{a\to b}(\varepsilon)
\end{equation}

\end{definition}

\begin{remark}
The obstruction component is stagewise: one measures the variation needed to fit $a\to c$ and $c\to b$ locally, each over the same site, then compares the summed variation with the direct $a\to b$ obstruction. The controlled bridge calibration in \autoref{fig:exp-projection-hardness-calibration} tests the hardness component.
\end{remark}

\subsubsection*{A Concrete Non-Transitivity Construction in a ReLU Setting}
\label{sec:relu_construction}

We adapt this triangle-wave composition mechanism to a three-modality setting. The resulting bounded-width construction has a two-stage map with lower required width than the direct map at the same depth: composition increases effective depth and representational complexity while preserving the width of each stage.

\subsubsection*{A width lower bound via breakpoints in one dimension}

Consider one-hidden-layer ReLU networks on $\mathbb{R}$:

\nopagebreak
\begin{equation}
f(x)=\beta_0+\beta_1x+\sum_{j=1}^w a_j \sigma(b_j x + c_j)\qquad \sigma(t)=\max\{t,0\}
\end{equation}

\noindent
The affine term introduces no breakpoint and can alternatively be implemented by at most two additional ReLU units, so it does not affect the asymptotic width separation. Such functions are continuous piecewise-linear with at most $w$ breakpoints (points where the slope changes).

\begin{lemma}[Breakpoint bound]
\label{lem:breakpoints}
Any one-hidden-layer ReLU network of width $w$ on $\mathbb{R}$ has at most $w$ breakpoints, hence at most $w + 1$ linear pieces.
\end{lemma}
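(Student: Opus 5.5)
The plan is to write $f$ as a sum of piecewise-linear pieces and locate every possible slope change. Each summand $a_j\sigma(b_jx+c_j)$ is handled separately, and then I count.

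\textbf{Step 1: one unit.} Fix a unit $j$ and split into two cases.
\begin{itemize}
\item If $b_j\neq 0$, the map $x\mapsto a_j\sigma(b_jx+c_j)$ is affine on each of the two half-lines determined by $t_j:=-c_j/b_j$. It is therefore continuous and piecewise linear, with slope possibly changing only at $t_j$.
\item If $b_j=0$, the unit is the constant $a_j\sigma(c_j)$, which has no breakpoint.
\end{itemize}
The term $\beta_0+\beta_1x$ is affine and likewise contributes no breakpoint. So each unit is smooth (indeed affine) on $\mathbb{R}\setminus\{t_j\}$.

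\textbf{Step 2: the sum.} Let $T:=\{t_j: b_j\neq 0\}$, so $|T|\le w$. On every open interval of $\mathbb{R}\setminus T$, each summand is affine, hence $f$ is affine there. It follows that $f$ is continuous and piecewise linear, and any point where its slope changes must lie in $T$. Some points of $T$ may fail to be genuine breakpoints, for example through cancellation of units sharing the same $t_j$, or when $a_j=0$. This only helps, since the set of breakpoints is a subset of $T$. Hence $f$ has at most $w$ breakpoints.

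\textbf{Step 3: count pieces.} Suppose the breakpoints are $m\le w$ distinct points. They cut $\mathbb{R}$ into $m+1$ open intervals, and $f$ is affine on each one. So $f$ has at most $m+1\le w+1$ linear pieces, where pieces are taken as maximal intervals of affinity.

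\textbf{Expected obstacle.} There is no real difficulty here. The only care needed is to be precise about what a "breakpoint" means: a point where the left and right derivatives differ. With that definition, both degenerate cases (coincident thresholds and $b_j=0$) are covered simply by inclusion in $T$, rather than requiring equality with $T$.
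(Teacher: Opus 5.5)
Your proof is correct and takes the same approach as the paper: each unit is affine away from its threshold $-c_j/b_j$, so every slope change of the sum lies among at most $w$ thresholds. You also treat the $b_j=0$ case explicitly and spell out the step from at most $w$ breakpoints to at most $w+1$ maximal affine pieces, both of which the paper leaves implicit.
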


\begin{proof}
Each unit $\sigma(b_j x+c_j)$ changes from linear to linear at the single point $x=-c_j/b_j$ (when $b_j\neq 0$). A linear combination of such units can only change slope at points where at least one unit changes regime. Thus the set of possible slope-change points is contained in the set of at most $w$ thresholds.
\end{proof}

Define $\mathcal{G}^{\mathrm{mlp}}_w$ to be one-hidden-layer ReLU networks of width $w$ with any fixed Lipschitz bound $L\ge1$ (the Lipschitz constraint does not affect the breakpoint counting argument).

\subsubsection*{Composition multiplies the number of linear regions}

\begin{lemma}[Composing width-$w$ one-hidden-layer networks increases pieces]
\label{lem:composition_pieces}
Let $g,h\in \mathcal{G}^{\mathrm{mlp}}_w$ be one-hidden-layer ReLU networks on $\mathbb{R}$. Then the composition $h\circ g$ is continuous piecewise-linear and can have $\Omega(w^2)$ linear pieces in the worst case.
\end{lemma}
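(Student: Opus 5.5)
The statement has two parts. First, $h\circ g$ is continuous and piecewise linear. Second, there is an explicit pair $g,h\in\mathcal{G}^{\mathrm{mlp}}_w$ whose composition has $\Omega(w^2)$ linear pieces. I would prove them in that order and use the triangle-wave construction of Telgarsky for the lower bound.

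\textbf{Piecewise linearity.} By \autoref{lem:breakpoints}, $g$ has at most $w$ breakpoints, so $\mathbb{R}$ splits into at most $w+1$ intervals on which $g$ is affine. Likewise $h$ is affine on at most $w+1$ intervals, with breakpoints $t_1,\dots,t_w$.

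On each piece $I$ of $g$, the composition $h\circ g$ is a continuous piecewise-linear function composed with an affine map. It can therefore change slope only at points of $I$ where $g(x)=t_k$ for some $k$. If $g$ is constant on $I$, no such points arise. Otherwise there is at most one such point per $k$.

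This gives continuity and piecewise linearity, together with the upper bound of $(w+1)^2$ pieces. That upper bound is consistent with the $\Omega(w^2)$ claim and shows it is tight.

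\textbf{Lower-bound construction.} Let $m\approx w/2$. Take $g$ to be a sawtooth on $[0,1]$ with $m$ teeth. Each monotone segment of $g$ maps onto $[0,1]$, with alternating slopes $\pm 2m$.

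Such a $g$ can be written as $\sum_j a_j\sigma(x-j/(2m))$ using $O(m)$ units. Outside $[0,1]$, add one or two units to flatten $g$ to a constant. The result is a width-$w$ one-hidden-layer network. To respect the Lipschitz bound $L$, rescale: replace $g$ by $g/(2m)$ and let $h$ absorb the rescaling of its input. Alternatively, note that the piece count is invariant under this rescaling; the paper already states that the Lipschitz constraint does not affect the counting.

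Take $h$ to be a sawtooth with $m$ teeth on the range of $g$, again with $O(w)$ units.

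\textbf{Counting pieces.} On each of the $2m$ monotone segments $I$ of $g$, the map $g|_I$ is an affine bijection onto $[0,1]$. Hence $h\circ g|_I$ is an affine reparametrization of $h|_{[0,1]}$ and has exactly $2m$ linear pieces.

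Adjacent segments meet at breakpoints, so the total count is at least $2m\cdot 2m-(2m-1)$, which is $\Omega(m^2)=\Omega(w^2)$. To confirm that these pieces are genuine, I would check that consecutive slopes differ. Within a segment, slope of $h\circ g$ equals slope of $h$ times a nonzero constant, and $h$ alternates between $\pm 2m$.

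\textbf{Main obstacle.} The delicate step is making the sawtooth exactly representable at width $O(w)$ with a single hidden layer. I would verify this directly. A one-dimensional continuous piecewise-linear function with $k$ breakpoints equals an affine function plus $\sum_{i=1}^{k}(\text{slope jump at }t_i)\,\sigma(x-t_i)$. So $k$ units plus the affine term suffice, and the paper notes that the affine term can be absorbed using two more units. The remaining bookkeeping is to choose $m=\lfloor (w-2)/2\rfloor$, or similar, so that the width budget is met.
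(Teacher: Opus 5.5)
Your proposal is essentially the paper's proof: both use Telgarsky's triangle-wave composition and count pieces by observing that each of the $2m$ monotone segments of the inner sawtooth is an affine bijection onto the outer map's domain, so it pulls back all $2m$ outer pieces. You settle for the conservative bound $4m^2-(2m-1)$ where the paper gets exactly $4K^2$, and you also supply the continuity argument and the $(w+1)^2$ upper bound, which the paper leaves implicit.

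One small fix is needed in the Lipschitz normalization. After you replace $g$ by $g/(2m)$, an $h$ that only absorbs the input rescaling has slopes $\pm 4m^2$, so it is not in $\mathcal{G}^{\mathrm{mlp}}_w$. You must also scale its output by $1/(4m^2)$, exactly as in the paper's $h_K(y)=T_K(2Ky)/(4K^2)$; this is harmless because a positive output scaling preserves breakpoints. The paper's remark that the Lipschitz constraint does not affect counting refers to the breakpoint upper bound, not to dropping class membership for the constructed maps.
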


\begin{proof}
For an integer $K\ge1$, let $T_K:[0,1]\to[0,1]$ be the triangular folding map obtained by linearly interpolating

\vspace{1em}
\nopagebreak
\begin{equation}
T_K\!\left(\frac{j}{2K}\right)
=
\begin{cases}
0 & j\text{ even}\\
1 & j\text{ odd}
\end{cases}
\qquad j=0,\ldots,2K
\end{equation}

\vspace{1em}
\noindent
It has $2K$ affine pieces, $2K-1$ internal breakpoints, and slopes $\pm2K$. On the relevant intervals define

\nopagebreak
\begin{equation}
g_K(x):=\frac{T_K(x)}{2K}
\qquad
h_K(y):=\frac{T_K(2Ky)}{4K^2}
\end{equation}

\vspace{1em}
\noindent
and extend both linearly beyond those intervals using their endpoint slopes. Each map is $1$-Lipschitz and has $2K-1$ internal breakpoints, hence admits the affine-plus-hinges representation above with width $2K-1$. Their composition on $[0,1]$ is

\vspace{1em}
\nopagebreak
\begin{equation}
(h_K\circ g_K)(x)=\frac{T_K(T_K(x))}{4K^2}
\end{equation}

\vspace{1em}
On each of the $2K$ monotone pieces of the inner $T_K$, its image traverses $[0,1]$; the $2K$ pieces of the outer $T_K$ therefore pull back to $2K$ distinct pieces there. Consequently $T_K\circ T_K$ has exactly $4K^2$ affine pieces. Taking $K=\lfloor(w+1)/2\rfloor$ gives $g_K,h_K\in\mathcal G_w^{\mathrm{mlp}}$ with a fixed Lipschitz bound and $\Omega(w^2)$ pieces in the composition.
\end{proof}

\subsubsection*{A bridging construction}

For this construction we use identity normalization on the displayed scalar coordinates.

Use the maps $g_K,h_K$ from \autoref{lem:composition_pieces}. Let modality $a$ have embedding $\tilde z^{(a)}_i=x_i\in[0,1]$, modality $c$ have embedding $\tilde z^{(c)}_i=g_K(x_i)$, and modality $b$ have embedding $\tilde z^{(b)}_i=(h_K\circ g_K)(x_i)$. Take the sample set to include the alternating vertices of the $4K^2$ affine pieces of the composition, so exact interpolation must reproduce each successive change of slope. Writing $w=2K-1$ for the stage width, we obtain:

\begin{itemize}
\item $a\to c$ is realizable by $g_K$ with width $w$, so $H_{a\to c}(0)=O(w)$.
\item $c\to b$ is realizable by $h_K$ with width $w$, so $H_{c\to b}(0)=O(w)$.
\item The sampled values of $h_K\circ g_K$ alternate at the vertices of its $4K^2$ pieces. Any continuous piecewise-linear interpolant of those values must change slope between successive rises and falls, and hence has $\Omega(K^2)=\Omega(w^2)$ breakpoints. By \autoref{lem:breakpoints}, exact direct alignment at the same depth requires width $\Omega(w^2)$.
\end{itemize}

\noindent
This yields a strict, explicit bridging effect at fixed depth: the two-stage hardness can be $O(w)$ and the direct hardness is $\Omega(w^2)$.

\begin{theorem}[Explicit non-transitivity via bridging]
\label{thm:nontransitive}
In the setting above (one-dimensional embeddings, one-hidden-layer ReLU family, and the stated noiseless sample grid), there exist modalities $a,b,c$ for which

\nopagebreak
\begin{equation}
H_{a\to c}(0)=O(w)\qquad
H_{c\to b}(0)=O(w)\qquad
H_{a\to b}(0)=\Omega(w^2)
\end{equation}

\noindent
Consequently, any thresholded ``compatibility'' relation based on hardness at scale $O(w)$ is non-transitive.
\end{theorem}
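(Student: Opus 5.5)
The plan is to assemble the theorem from the explicit construction preceding it, using \autoref{lem:composition_pieces} for the upper bounds and \autoref{lem:breakpoints} for the lower bound.

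\emph{Upper bounds.} Fix $K$ and set $w=2K-1$. Take modalities $a,c,b$ with scalar embeddings $x_i$, $g_K(x_i)$, and $(h_K\circ g_K)(x_i)$ on the stated sample grid. By \autoref{lem:composition_pieces}, $g_K$ and $h_K$ are $1$-Lipschitz. Each has $2K-1$ internal breakpoints, so each has an affine-plus-hinges representation of width $2K-1=w$, plus at most two units for the affine term. Hence $g_K\in\mathcal G^{\mathrm{mlp}}_{w+2}$ realizes $a\to c$ with zero error. Likewise $h_K$ realizes $c\to b$ exactly, because $\tilde z^{(b)}_i=h_K(\tilde z^{(c)}_i)$ by construction. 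Since the family is nested and the infimum is attained on the finite grid, this gives $H_{a\to c}(0),H_{c\to b}(0)\le w+2=O(w)$.

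\emph{Lower bound.} Let $f\in\mathcal G^{\mathrm{mlp}}_{w'}$ satisfy $f(x_i)=T_K(T_K(x_i))/4K^2$ at every sample point. The sample set contains the $4K^2+1$ vertices $t_0<t_1<\dots<t_{4K^2}$ of the affine pieces of $T_K\circ T_K$. The targets at these vertices alternate between $0$ and $1/4K^2$. On each $[t_{j-1},t_j]$, $f$ is continuous and piecewise linear, and it must rise from $0$ to $1/4K^2$ or fall the other way. So on each interval $f$ has a segment of positive slope or a segment of negative slope, with the sign alternating in $j$. Between two consecutive such intervals the slope must change sign, which forces at least one breakpoint in $(t_{j-1},t_{j+1})$.

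To count breakpoints without double-counting, I will pick the midpoint-adjacent disjoint windows $[t_{2m},t_{2m+2}]$. Each window contains both a rise and a fall, hence at least one breakpoint in its interior, and the windows overlap only at endpoints. This gives at least $2K^2$ breakpoints. (A breakpoint exactly at $t_{2m}$ can be attributed to the window on its left.) By \autoref{lem:breakpoints}, $w'\ge 2K^2=\Omega(w^2)$, so $H_{a\to b}(0)=\Omega(w^2)$. The affine term adds no breakpoints, so it does not affect this count.

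\emph{Non-transitivity.} Choose a threshold $\alpha_0=w+2$. Then $a\Rightarrow c$ and $c\Rightarrow b$ hold for the hardness component. For $K\ge2$ we have $2K^2>w+2$, so $a\not\Rightarrow b$. The same holds for any threshold $\alpha_0=\Theta(w)$ once $K$ is large.

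The main obstacle is making the alternation argument rigorous. It must show that a piecewise-linear interpolant cannot pass through consecutive alternating values with fewer than one slope change per rise/fall pair. The disjoint-window counting above is what I would try first. I also need the grid to include all piece vertices, which is true by the stated choice of samples.
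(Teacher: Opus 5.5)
Your proposal is correct and follows the paper's argument. The maps $g_K$ and $h_K$ witness $O(w)$ stage hardness. The alternating sampled values of $h_K\circ g_K$ at the $4K^2+1$ piece vertices force $\Omega(K^2)$ slope changes, which the breakpoint lemma turns into the $\Omega(w^2)$ width bound. Your disjoint-window count over $[t_{2m},t_{2m+2}]$ simply makes rigorous the paper's one-line step that an interpolant ``must change slope between successive rises and falls.'' The two extra units for the affine term and the appeal to attainment are unnecessary but harmless.
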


\begin{remark}
The construction underlying \autoref{thm:nontransitive} uses noiseless samples and a fixed site to isolate the effect of representational complexity under the chosen projection family.
\end{remark}

\subsubsection*{Obstruction-Driven Non-Transitivity}

Hardness is one source of alignment failure. A second source arises when good local maps exist throughout the site and their gluing within the declared local family requires large spatial variation over the semantic neighborhood graph.

\subsubsection*{A two-cluster sign-flip toy model}

Let $G$ be a graph whose vertices split into two well-connected subgraphs $V_+$ and $V_-$ with a sparse cut between them. Suppose the best alignment between modalities differs by a sign flip between the clusters:

\nopagebreak
\begin{equation}
\tilde z_v^{(b)} \approx
\begin{cases}
\tilde z_v^{(a)} & v\in V_+\\
-\tilde z_v^{(a)} & v\in V_-
\end{cases}
\end{equation}

If the projection family includes scalar multiplication by $\pm 1$, each cluster admits a perfect local map. Every globally constant map incurs error on one cluster. A locally varying parameter field attains near-zero local error by setting $w_v=+1$ on $V_+$ and $w_v=-1$ on $V_-$. The obstruction energy then concentrates on edges crossing the cut.

\begin{proposition}[Cut-induced obstruction]
In the sign-flip model with scalar parameters $w_v\in\mathbb{R}$, the minimum variation energy among perfect-fitting fields satisfies

\nopagebreak
\begin{equation}
\min\Big\{\sum_{(u,v)\in E}(w_u-w_v)^2: w_v=+1\ \forall v\in V_+,\ w_v=-1\ \forall v\in V_-\Big\}
= 4\,|E(V_+,V_-)|
\end{equation}

\noindent
where $E(V_+,V_-)$ is the set of cut edges.
\end{proposition}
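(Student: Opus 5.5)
The minimization in the statement is over a feasible set consisting of a single field. Every vertex of $V=V_+\cup V_-$ has its value prescribed: $w_v=+1$ on $V_+$ and $w_v=-1$ on $V_-$. So the plan is to show the feasible set is exactly one field and then evaluate its energy $E_{\mathcal P}(w)=\sum_{(u,v)\in E}(w_u-w_v)^2$ edge by edge. I would first record that the two clusters partition $V$, which the sign-flip model assumes. Hence the constraint set is nonempty and a singleton, and the minimum is attained and equals the energy of that unique field. No Lagrangian, spectral, or Poincar\'e argument such as \autoref{lem:poincare} is needed.

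Next I would split the edge set into three disjoint classes: edges inside $V_+$, edges inside $V_-$, and the cut edges $E(V_+,V_-)$. For an edge with both endpoints in $V_+$, the difference $w_u-w_v=1-1$ vanishes. The same holds inside $V_-$, where $w_u-w_v=-1-(-1)=0$. For a cut edge the difference is $\pm 2$ depending on orientation, so it contributes $4$. Summing gives $4\,|E(V_+,V_-)|$.

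I would add a brief remark tying this to the obstruction invariant. It is not needed for the displayed equality, but it motivates it. With $\ell$ the squared error and tolerance $\varepsilon=0$ in the exact model, the perfect-fitting fields in the scalar family $g_w(x)=wx$ are precisely the displayed one, provided each $\tilde z_v^{(a)}\neq 0$. In that case the identity above gives $C_{a\to b}(0)=4|E(V_+,V_-)|$.

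The main obstacle is not the computation; it is being precise about what is being minimized. If one wanted the stronger claim that the infimum over all fields with zero loss equals $4|E(V_+,V_-)|$, one must rule out degenerate vertices with $\tilde z_v^{(a)}=0$. At such a vertex $w_v$ is unconstrained, and a free vertex could lower the energy by interpolating between the two values. I would state the nondegeneracy assumption explicitly and otherwise confine the proof to the constrained minimization exactly as written.
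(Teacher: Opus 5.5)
Your proof is correct and follows the paper's own reasoning: the constraints fix the field uniquely, within-cluster edges contribute zero, and each cut edge contributes $(1-(-1))^2=4$, giving $4|E(V_+,V_-)|$. Your nondegeneracy caveat ($\tilde z_v^{(a)}\neq 0$) is the condition the paper itself uses in its calibration section, where it identifies this field with the unique zero-error field $w_i=b_i/a_i$ and hence with $C_{a\to b}(0)$.
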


\begin{remark}
This shows that obstruction can be large even when local fits are perfect. Its dependence on the site geometry (cut size) makes a fixed, modality-independent site necessary for comparability.
\end{remark}

\subsection{Assumptions Relating Site Geometry to Latent Semantics}

The interpretation of the invariants depends on how closely the fixed site $G$ represents semantic locality. We state the assumptions used to connect the formal quantities to the latent domain.

\begin{definition}[Latent locality approximation (informal)]
We say $G$ approximates latent locality if vertices connected by an edge correspond to samples that are near in latent semantic distance and latent-near samples are likely connected by short paths.
\end{definition}

A typical synthetic regime constructs $G$ directly from ground-truth latent distances, making the approximation exact by design. In general settings, consensus or label-based graphs aim to approximate this condition.

\begin{proposition}[Latent interpretation under site assumptions (informal)]
Assume: $G$ approximates latent locality, each modality embedding $\tilde z^{(m)}$ is locally regular on $G$ (e.g., Lipschitz along edges), and the global hierarchy and declared local family are Lipschitz-controlled and sufficiently expressive for their stated roles.

\nopagebreak
\noindent
\\Then:

\begin{itemize}
\item low hardness suggests the existence of a simple global alignment between the modalities on the dataset,
\item low obstruction suggests that locally optimal alignments within the declared local family glue coherently across neighborhoods,
\item large finite obstruction suggests region-dependent local correspondences within the declared local family after global expressivity has been controlled by the global hierarchy.
\end{itemize}

\end{proposition}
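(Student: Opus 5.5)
The statement is informal, so I would first make each bullet precise as a conditional implication and then prove each one from results already established. I would fix explicit constants. Let $\kappa$ be a bound on the edge-wise change of embeddings, $\|\tilde z_u^{(m)}-\tilde z_v^{(m)}\|\le\kappa$ for every $(u,v)\in E$, as the local-regularity assumption. Let $L$ be the Lipschitz constant of the global hierarchy and $L_w$ the parameter-Lipschitz constant of the local family. Latent locality would be encoded as a bound on the latent distance across an edge, together with a bound on hop distance in terms of latent distance.

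\textbf{First bullet.} This one is essentially definitional. If $H_{a\to b}(\varepsilon)\le\alpha$ and the infimum is attained, then some $g\in\mathcal G_\alpha$ has mean error at most $\varepsilon$ on the dataset. \autoref{prop:hardness_generalization} then upgrades this to a population statement at error $\varepsilon+\eta$. I would phrase ``simple'' as membership in a low-index class of the declared family.

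\textbf{Second bullet.} Here I would invoke \autoref{thm:obstruction_to_global} and the remark that follows it. Small $C_{a\to b}(\varepsilon)$ together with $\lambda_2(G)$ bounded away from zero gives a single global parameter with error at most $\varepsilon+L_w\sqrt{C_{a\to b}(\varepsilon)/(n\lambda_2(G))}$. This is the formal meaning of ``glue coherently.'' I would add an edgewise version as well: a feasible field with energy at most $\tau$ satisfies $\|w_u-w_v\|\le\sqrt\tau$ on every edge. Combining this with latent locality shows that latent-near samples, which are joined by short paths, use nearly identical local maps.

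\textbf{Third bullet.} This is the contrapositive direction, and I expect it to be the main obstacle, because a large obstruction could in principle come from noise rather than genuine region dependence. My plan is as follows. Assume the global hierarchy has been controlled, meaning no constant field in $\mathcal G_{\mathrm{loc}}$ reaches $\varepsilon$, so $\Err^{\mathrm{global}}_{a\to b}>\varepsilon$, while $C_{a\to b}(\varepsilon)<\infty$. I would then show that every feasible field has nonzero variance $\sum_v\|w_v-\bar w\|^2$, using \autoref{thm:obstruction_to_global} in reverse. The resulting lower bound is
\[
C_{a\to b}(\varepsilon)\ge n\lambda_2(G)\bigl(\Err^{\mathrm{global}}_{a\to b}-\varepsilon\bigr)^2/L_w^2 .
\]
Local regularity and the Lipschitz control of the family keep locally fitted parameters from oscillating on single edges. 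Under those conditions the variance must be realized by distinct parameter regions, and the cut-induced proposition serves as the model case.

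Since the conclusions are stated with ``suggests,'' I would finish by stating explicitly that the proof establishes only these quantitative surrogates, and that the latent-semantic reading is an interpretation conditional on the locality approximation.
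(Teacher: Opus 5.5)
The paper gives no proof of this proposition. It is left informal and rests on \autoref{prop:hardness_generalization}, on \autoref{thm:obstruction_to_global} with the remark after it, and on the cut-induced proposition. Your first two bullets use exactly these results and are fine, including the path version: along a $k$-edge path, Cauchy--Schwarz gives $\|w_u-w_v\|\le\sqrt{k\tau}$.

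The gap is in the third bullet, at the claim that local regularity and the Lipschitz control of the family ``keep locally fitted parameters from oscillating on single edges.'' Regularity bounds how the \emph{data} change along an edge. The Lipschitz conditions are forward bounds: $g_w$ in $x$, and the loss in $w$. Bounding how the fitted \emph{parameters} change needs an inverse-stability (conditioning) property of the local fit, and none is among the hypotheses.

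Here is a concrete counterexample, using identity normalization as in the paper's calibrations. Take the latent $k$-nearest-neighbor site $G_X$ with latent coordinates $s_v\in[0,1]$, the family $g_w(x)=wx$, and $\varepsilon=0$. Set $\tilde z^{(a)}_v=\delta(1+s_v)$ and $\tilde z^{(b)}_v=\xi_v\tilde z^{(a)}_v$, with signs $\xi_v\in\{-1,+1\}$ drawn independently of the latent geometry. Both embeddings change by at most $4\delta$ along every edge. On $|w|\le 1$, $g_w$ is $1$-Lipschitz and the loss is $16\delta^2$-Lipschitz in $w$. So all stated regularity holds, with constants that only improve as $\delta\to0$. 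Yet the unique zero-error field is $w_v=\xi_v$, and $C_{a\to b}(0)=4\,|\{(u,v)\in E:\xi_u\neq\xi_v\}|$ is large with no region structure. Under whitening the same thing happens on any stretch of the site where $\tilde z^{(a)}$ is near zero.

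This is the paper's random-sign control, rescaled. The paper answers that control with a caveat (the semantic reading needs a justified site), not with an argument.

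Your model case also cuts against your premise. In the cut-induced proposition, all of the energy sits in single-edge jumps $|w_u-w_v|=2$. What separates the planted flip from random signs is where the jump edges lie, and that has to enter as a hypothesis. For example, assume the zero-error fit $w^0$ satisfies $\|w^0_u-w^0_v\|\le c\kappa$ off a declared sparse cut $F$. Then the energy of $w^0$ on $E\setminus F$ is at most $c^2\kappa^2|E\setminus F|$. Since $C_{a\to b}(0)\le E_{\mathcal P}(w^0)$, a large obstruction forces the energy of $w^0$ onto $F$.

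Two smaller points. First, reading ``controlled by the global hierarchy'' as ``no constant field in $\mathcal G_{\mathrm{loc}}$ reaches $\varepsilon$'' uses the local family instead of $\{\mathcal G_\alpha\}$. It also adds nothing: a feasible constant field has zero energy, so the condition already follows from $C_{a\to b}(\varepsilon)>0$.

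Second, your lower bound $C_{a\to b}(\varepsilon)\ge n\lambda_2(G)(\Err^{\mathrm{global}}_{a\to b}-\varepsilon)^2/L_w^2$ is correct; it is the remark after \autoref{thm:obstruction_to_global}, rearranged. But it runs from global misfit to large obstruction, and the bullet needs the converse direction. For that, use $E_{\mathcal P}(w)\le\lambda_n(G)\sum_v\|w_v-\bar w\|^2$, where $\lambda_n(G)$ is the largest Laplacian eigenvalue. This gives every feasible field a spread of at least $C_{a\to b}(\varepsilon)/\lambda_n(G)$. That is spread, but still not spatial organization.
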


Under these assumptions, hardness and obstruction describe two distinct properties of the latent cross-modal relation. Applications may specialize the assumptions according to the available semantic graph and projection family.

\section{Controlled Calibration Experiments}

We evaluate the proposed quantities in controlled settings with analytically known reference behavior. This allows direct verification of the projection-hardness estimator, the scaling of constrained obstruction at zero error, and the dependence of obstruction on the fixed site graph $G$. In the scalar obstruction model, the unique zero-error field is also the exact $\lambda=0$ solution of~\eqref{eq:sheaf_objective}, linking the analytic calibration to the computational objective.

Throughout this section we use identity normalization on the displayed scalar coordinates for every modality and condition.

\subsection{Projection-hardness calibration with a known bridge}

The first experiment evaluates projection hardness in a known-answer bridge setting. For each $K \in \{2,4,8,16\}$, let $T_K:[0,1]\to[0,1]$ denote the $K$-period triangular folding map from \autoref{lem:composition_pieces}. We use an equally spaced training grid of $4096$ points and an equally spaced evaluation grid of $8192$ points, both including the endpoints of $[0,1]$. On either grid the modalities are

\nopagebreak
\begin{equation}
a_i=x_i\qquad c_i=T_K(x_i)\qquad b_i=T_K(c_i)=T_K(T_K(x_i))
\end{equation}

The intermediate modality $c$ therefore defines an exact bridge between $a$ and $b$. Since $T_K$ has $2K-1$ internal breakpoints, each stage $a\to c$ and $c\to b$ is representable by a one-hidden-layer ReLU spline with $2K-1$ knots. The direct composition $T_K\circ T_K$ has $4K^2-1$ internal breakpoints. The resulting separation between staged and direct alignment is known analytically.

\noindent
\\We use nested fixed-knot ReLU spline families,

\nopagebreak
\begin{equation}
\mathcal{G}_w=\left\{x\mapsto \beta_0+\beta_1x+\sum_{j=1}^{w}\gamma_j\operatorname{ReLU}(x-\tau_j)\right\}
\end{equation}

\noindent
where the knots $\tau_j$ follow a fixed nested dyadic sequence: level $r$ appends the odd points $j/2^r$, and the candidate-width grid is

\nopagebreak
\begin{equation}
w\in\{1,3,7,15,31,63,127,255,511,1023\}
\end{equation}

Thus the first $2^r-1$ knots are the full level-$r$ dyadic grid without endpoints, and $\mathcal{G}_w\subseteq\mathcal{G}_{w'}$ whenever $w\leq w'$. For each width, all coefficients are fit by ridge-stabilized normal-equation least squares with ridge coefficient $10^{-12}$. Error is mean-squared error on the $8192$-point evaluation grid, and the target tolerance is $\varepsilon=10^{-8}$.

This calibration uses width as its declared complexity index and isolates the structural width separation; no additional Lipschitz constraint is imposed on the fitted spline coefficients.

\noindent
\\The empirical hardness estimate is

\nopagebreak
\begin{equation}
\widehat{H}_{p\to q}(\varepsilon)=\min\left\{w:\operatorname{MSE}_w(p\to q)\leq\varepsilon\right\}
\end{equation}

For the staged map $a\to c\to b$, the same width $w$ is used for both stages. Stage one is fit from $a_{\mathrm{train}}$ to $c_{\mathrm{train}}$, stage two from $c_{\mathrm{train}}$ to $b_{\mathrm{train}}$, and the composed prediction evaluates stage two at the stage-one predictions on the evaluation grid. If no tested width reaches $\varepsilon$, the result is recorded as the right-censored observation $\widehat H>1023$.

Two control intermediates test whether the measured bridge advantage depends on sample correspondence. For each $K$ and each $s\in\{0,\ldots,9\}$, NumPy's \texttt{default\_rng} uses seed $100000K+s$. In the shuffled control, the generator randomly permutes $c_{\mathrm{train}}$, preserving its marginal values and destroying sample correspondence. In the random control, the generator draws $c_{\mathrm{train}}$ independently from $\operatorname{Uniform}[0,1]$. The target $b$ and fitting procedure remain unchanged. The true bridge is deterministic and is evaluated once per $K$; each control has $10$ repetitions per $K$.

\autoref{fig:exp-projection-hardness-calibration} summarizes the calibration. Panel A compares the empirical hardness of the direct map $a\to b$ with the staged map $a\to c\to b$ as $K$ increases. The analytic breakpoint counts are $2K-1$ for each stage and $4K^2-1$ for the direct composition, so the direct map is expected to require substantially larger width at the same target error. Right-censored observations are placed at $2\cdot1023=2046$ for display on the finite log scale. Panel B uses this plotting value, denoted $\widehat H^{\mathrm{plot}}$, and reports

\nopagebreak
\begin{equation}
\log_2\!\left(\widehat{H}^{\mathrm{plot}}_{\mathrm{direct}}/
\widehat{H}^{\mathrm{plot}}_{\mathrm{staged}}\right)
\end{equation}

When both values are observed, a positive ratio means that the staged map reaches the target error at lower width. A value involving $2046$ records a censored comparison at the tested-width boundary. The true intermediate shows the expected staged advantage. The shuffled and random controls do not preserve the samplewise bridge relation and do not produce the same systematic advantage.

\begin{figure}[H]
    \centering
    \includegraphics[width=\linewidth]{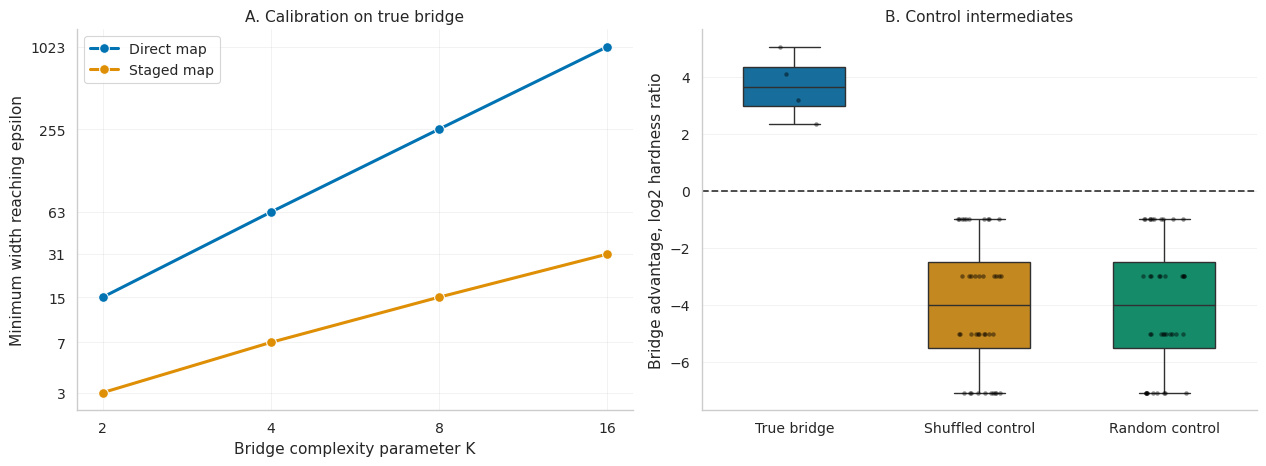}
    \caption{Projection-hardness calibration in the known bridge experiment. Panel A compares direct and staged hardness as bridge complexity increases; panel B compares the true intermediate with shuffled and random controls. Right-censored values are displayed at $2046$.}
    \label{fig:exp-projection-hardness-calibration}
\end{figure}

\subsection{Obstruction calibration in a scalar sign-flip model}

The second experiment evaluates the obstruction energy in a controlled sign-flip construction with a known analytic reference. Each graph has two clusters of $80$ vertices. For each seed $s\in\{0,\ldots,39\}$, latent coordinates are sampled uniformly on $[0,0.45]$ and $[0.55,1]$ for the two clusters and sorted within each cluster. Each vertex is joined to the preceding and following three positions when available, giving $468$ undirected within-cluster edges and connected cluster subgraphs. For each

\nopagebreak
\begin{equation}
q\in\{1,2,4,8,16,32,64\}
\end{equation}

\noindent
exactly $q$ distinct edges are then sampled uniformly without replacement from all cross-cluster pairs. The graph is unweighted and has $468+q$ edges. NumPy's \texttt{default\_rng} is initialized with $s$ for graph construction. We use scalar local maps

\nopagebreak
\begin{equation}
g_w(x)=wx
\end{equation}

\noindent
In the planted condition, the target relation is

\nopagebreak
\begin{equation}
b_i=\begin{cases}a_i & i\in V_+\\ -a_i & i\in V_-\end{cases}
\end{equation}

The zero-error local parameter field is therefore $w_i=+1$ on $V_+$ and $w_i=-1$ on $V_-$. Each cut edge contributes $(1-(-1))^2=4$ to the sheaf-Laplacian energy. The expected obstruction is

\nopagebreak
\begin{equation}
C_0=4|E(V_+,V_-)|
\end{equation}

For any nonzero scalar inputs $a_i$, setting $b_i=w_i a_i$ makes the zero-error local field uniquely determined by $w_i=b_i/a_i$. Because the energy is independent of the input magnitudes once this field is fixed, the constrained obstruction at $\varepsilon=0$ is its graph Dirichlet energy,

\nopagebreak
\begin{equation}
C_0=\sum_{(i,j)\in E}(w_i-w_j)^2
\end{equation}

The closed-form field also identifies the exact zero-error endpoint of the computational objective. With squared loss, $A=\operatorname{diag}(a_1,\ldots,a_n)$, target vector $b$, and graph Laplacian $L$, the first-order condition for~\eqref{eq:sheaf_objective} is

\nopagebreak
\begin{equation}
(A^2+n\lambda L)w=Ab
\end{equation}

At $\lambda=0$, nonzero inputs give the unique solution $w=A^{-1}b$, exactly the field evaluated in the calibration. The reported zero-error curve is therefore both the constrained optimum at $\varepsilon=0$ and the exact $\lambda=0$ endpoint of the Lagrangian path.

We compare the planted sign-flip condition with two controls. The compatible control uses $b_i=a_i$ for all samples and has zero obstruction. The shuffled-sign control contains exactly $80$ signs of each value, randomly permuted independently of the planted partition with seed $10000+100q+s$. This control tests whether high obstruction can arise from local disagreement unrelated to the planted structure. Such obstruction indicates parameter variation relative to $G$; its semantic interpretation requires a justified site.

\autoref{fig:exp-obstruction-calibration} reports the exact energy calculation over the $40$ graph draws at each $q$. Panel A fixes $q=16$ and compares the compatible control, the planted sign-flip construction, and the shuffled-sign control after normalization by the planted analytic reference $4q$. Panel B varies $q$ in the planted construction and compares the mean computed energy with the analytic reference. The planted value equals $4q$ in every draw.

\begin{figure}[H]
    \centering
    \includegraphics[width=\linewidth]{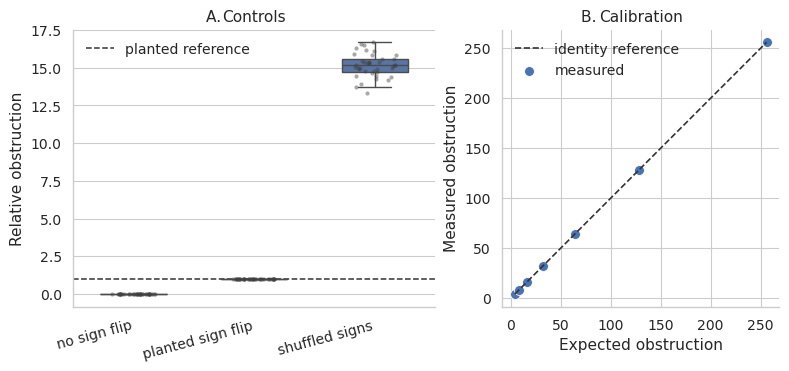}
    \caption{Obstruction calibration for the scalar sign-flip construction. Panel A compares compatible, planted, and shuffled-sign conditions; panel B compares measured obstruction with the analytic reference across cut sizes.}
    \label{fig:exp-obstruction-calibration}
\end{figure}

\subsection{Site-sensitivity calibration}

The third experiment evaluates the dependence of obstruction on the fixed site graph $G$. It uses $160$ vertices split into two clusters of $80$, the zero-error tolerance $\varepsilon=0$, and $50$ seeds $s\in\{0,\ldots,49\}$. Scalar inputs are drawn as $a_i\sim\operatorname{Uniform}[0.5,1.5]$ using \texttt{default\_rng} seed $s$. The data, projection family, tolerance, and total number of graph edges remain fixed as edge placement in $G$ varies.

We reuse the scalar projection family $g_w(x)=wx$ and construct three conditions. In the structured sign-flip condition, the required local parameter is $w_i=+1$ on one latent cluster and $w_i=-1$ on the other. In the compatible control, $w_i=+1$ for all samples. In the random-sign control, signs are assigned independently of the latent clusters.

\noindent
\\Each cluster first receives its $79$-edge chain, which guarantees connectedness. For

\nopagebreak
\begin{equation}
q\in\{1,2,4,8,16,32,64,128,256\}
\end{equation}

\noindent
$q$ cross-cluster edges are sampled uniformly without replacement, and within-cluster edges are sampled uniformly without replacement from the remaining pairs until the unweighted graph has exactly $900$ edges. The graph generator uses seed $10000s+q$. Thus increasing $q$ replaces within-cluster edges with cross-cluster edges without changing the edge count. Because perfect local fitting gives $w_i=b_i/a_i$, the obstruction is computed as

\nopagebreak
\begin{equation}
C_0(G)=\sum_{(i,j)\in E(G)}(w_i-w_j)^2
\end{equation}

In the structured sign-flip condition, each cross-cluster edge contributes $4$, so the analytic reference is

\nopagebreak
\begin{equation}
C_0(G_q)=4q
\end{equation}

The compatible control tests that the computation does not introduce spurious obstruction. The random-sign control draws signs independently and uniformly from $\{-1,+1\}$ with seed $50000+s$ (with an all-equal draw deterministically flipped at one vertex). The same sign realization is reused across the $q$ values for a given $s$, isolating the effect of graph placement. This control tests whether high obstruction can arise from local disagreement unrelated to the latent partition.

\autoref{fig:exp-site-sensitivity} shows obstruction as the median over the $50$ random graph draws, with bands indicating the interquartile range. All $1350$ field-graph evaluations have zero local mean-squared error. As $q$ increases, within-cluster edges are replaced by cross-cluster edges with the total number of edges fixed. The structured sign-flip condition equals the analytic reference $4q$ in every draw. The compatible control is exactly zero because a constant parameter field fits all samples. The random-sign control can also produce large obstruction, illustrating why obstruction must be interpreted relative to the chosen site $G$ and the hypothesis that its edges encode meaningful locality.

\begin{figure}[H]
    \centering
    \includegraphics[width=\linewidth]{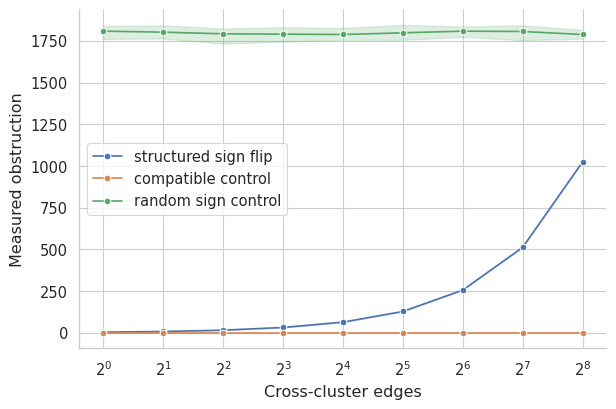}
    \caption{Site-sensitivity calibration of the obstruction quantity under varying cross-cluster edge placement. The data and projection family are fixed.}
    \label{fig:exp-site-sensitivity}
\end{figure}

\subsection{Practical case-analysis protocol}

The same computation applies to small paired multimodal datasets. For example, in an image-text collection with paired captions, one may take frozen image and text encoders such as CLIP-style encoders~\cite{radford2021learning}, build a fixed site $G$ from metadata labels, object identities, temporal adjacency, or a predeclared consensus graph, and then compute all modality-pair quantities on that site.

\noindent
\\A complete report should:

\begin{enumerate}
\item state the site construction and basic graph statistics, including edge density, component count, weights, and $\lambda_2(G)$;
\item report projection hardness for all modality pairs under a single normalization;
\item report the constrained sheaf-Laplacian obstruction in~\eqref{eq:obstruction_constrained}, including whether it was solved directly or approximated through~\eqref{eq:sheaf_objective};
\item repeat the computation under a small set of graph perturbations to determine whether high obstruction is stable across reasonable site choices.
\end{enumerate}

For a Lagrangian approximation, the weight grid, optimizer, initialization, stopping rule, feasibility tolerance, and handling of nonunique solutions should also be reported. The controlled experiments above apply the graph-relative part of this protocol in settings where the expected answer is known analytically.

\subsection{Summary of calibration results}

Together, the three experiments verify the expected behavior of the proposed quantities under controlled conditions. The bridge experiment recovers the analytically constructed separation between staged and direct alignment. Shuffled and random intermediates do not yield the same systematic advantage. The obstruction calibration reproduces the expected cut-induced energy in a scalar sign-flip model and identifies it with the exact $\lambda=0$ endpoint of the Lagrangian objective. The site-sensitivity experiment shows how the same constrained energy changes with $G$ when the data and projection family remain unchanged. These results establish computability in the tested settings and clarify obstruction as a graph-relative local-to-global inconsistency measure. The practical protocol gives the corresponding case-analysis procedure for paired multimodal data.

\section{Limitations}

The interpretation of the proposed quantities depends on the fixed site, the nested global family, and the declared local family. Obstruction is relative to the site graph and the parameter-sheaf specification, while hardness is relative to the normalized nested hierarchy, so these choices must remain fixed across modality pairs. The controlled calibrations use identity restrictions and analytically tractable synthetic settings. On paired real data, encoder choice, sample coverage, graph construction, and any edge-dependent transports will also affect the resulting values.

\section{Conclusion}

We introduced two directed invariants for cross-modal compatibility on a fixed, modality-independent site. Projection hardness measures the global model complexity required to reach a target error across a nested hierarchy, and sheaf-Laplacian obstruction measures the parameter variation required by locally fitted maps within a declared local family. Under identity restrictions, obstruction reduces to vector-valued graph Dirichlet energy, with global sections providing the corresponding gluing interpretation. The same formulation accommodates edge-dependent transports between local parameter spaces.

We related obstruction to the graph spectral gap and excess global-map error, established non-transitivity through an explicit ReLU construction, and recovered the predicted behavior in controlled calibrations of hardness, obstruction scaling, and site sensitivity. Fixing the site, normalization, global hierarchy, and local sheaf specification makes these quantities comparable across directed modality pairs and allows mediation to be tested directly.


\newpage
\bibliography{references}

\end{document}